\def\eqref#1{equation~\ref{#1}}
\def\1{\bm{1}}
\newcommand{\train}{\mathcal{D}}
\def\eps{{\epsilon}}
\def\our{\sysname}
\def\xcf{\vx^+}
\def\proxycls{{h}}
\def\trnD{{\mathcal{D}}} %_{\text{train}}}}
\def\posD{{\mathcal{D}_1}}
\def\negD{{\mathcal{D}_0}}
\def\genmdl{\mathcal{R}_\theta} %{\mathcal{M}}}
\def\desiredlbl{{y^+}}
\def\xcal{{\mathcal{X}}}
\def\ycal{{\mathcal{Y}}}
\def\RR{{\mathbb{R}}}
\def\Dplus{{\mathcal{D}_h^{+}}}
\def\Dminus{{\mathcal{D}_h^{-}}}
\newcommand{\PR}{R}
\newcommand{\hatPR}{\genmdl}
\newcommand{\recmdl}{\psi}
\newtheorem{theorem}{Theorem}[section]
\def\vx{{\bm{x}}}
\DeclareMathAlphabet{\mathsfit}{\encodingdefault}{\sfdefault}{m}{sl}
\SetMathAlphabet{\mathsfit}{bold}{\encodingdefault}{\sfdefault}{bx}{n}
\newcommand{\E}{\mathbb{E}}
\DeclareMathOperator*{\argmax}{arg\,max}
\DeclareMathOperator*{\argmin}{arg\,min}
\setlist{nosep}
\newcommand{\lokesh}[1]{{\color{black}{#1}}}
\newcommand{\prateek}[1]{{\color{black}{#1}}}
\newcommand{\xhdr}[1]{\vspace{1.5mm}\noindent{\bf #1.}}
\newcommand{\sysname}{GenRe} 
\title{From Search to Sampling: Generative Models for Robust Algorithmic Recourse}
\author{Prateek Garg\thanks{Correspondence to: Prateek Garg \texttt{<prateekg@iitb.ac.in>}}  \quad Lokesh Nagalapatti \quad Sunita Sarawagi\\
Indian Institute of Technology Bombay\\
}
\begin{document}

\maketitle

\begin{abstract}

Algorithmic Recourse provides recommendations to individuals who are adversely impacted by automated model decisions, on how to alter their profiles to achieve a favorable outcome. Effective recourse methods must balance three conflicting goals: proximity to the original profile to minimize cost, plausibility for realistic recourse, and validity to ensure the desired outcome. We show that existing methods train for these objectives separately and then search for recourse through a joint optimization over the recourse goals during inference, leading to poor recourse recommendations. We introduce \sysname, a generative recourse model designed to train the three recourse objectives jointly. Training such generative models is non-trivial due to lack of direct recourse supervision. We propose efficient ways to synthesize such supervision and further show that \sysname's training leads to a consistent estimator. Unlike most prior methods, that employ non-robust gradient descent based search during inference, \sysname\  simply performs a forward sampling over the generative model to produce minimum cost recourse, leading to superior performance across multiple metrics. We also demonstrate \sysname~provides the best trade-off between cost, plausibility and validity, compared to state-of-art baselines. Our code is available at: \small{\url{https://github.com/prateekgargx/genre}}.

\end{abstract}

\section{Introduction}
Machine learning models are increasingly used in high-stakes decision-making areas such as in finance~\citep{josyula2024leveraging}, judiciary~\citep{elyounes2019bail}, healthcare~\citep{burger2020risk}, and hiring~\citep{Schumann2020WeNF}, prompting the need for transparency and fairness in decision-making~\citep{barocas-hardt-narayanan}. This has driven the development of tools and techniques to provide \textit{recourse}, redress mechanisms for individuals adversely impacted by these model decisions, a legal requirement in certain jurisdictions \citep{Kaminski_2019_Right_2_explanations}. For example, consider a loan applicant with profile $\vx$ who is denied loan. Recourse seeks to answer the question, \textit{`How should $\vx$ improve their profile to an alternative $\xcf$ to secure a loan in the future?'}. 
A good recourse instance $\xcf$ should be
(1) valid, i.e., it achieves the desired label, (2) proximal to $\vx$ to minimize the effort involved in implementing recourse, and (3) be a plausible member of the desired class. The decision-making model is often proprietary and is kept hidden to safeguard enterprises from strategic gaming \citep{hardt2016strategic}, model theft~\citep{reith2019stealing}, and other risks. We are given instead examples of individuals who got the desired label (positive instances), and those who did not (negative instances). Our goal is to learn a recourse mechanism using just these samples.  

The recourse problem has been widely explored under frameworks like algorithmic recourse~\citep{ustun2019actionable}, counterfactual explanations~\citep{wachter2017counterfactual}, and contrastive explanations \citep{Karimi2022ASOsurvey}. 
Most prior work assume direct access to the decision-making model, and search for the recourse instance through constrained optimization during inference on objectives which encourage low cost and validity~\citep{Karimi2022ASOsurvey,wachter2017counterfactual,mothilal2020dice,laugel2017inverseGS}.
Others aim for robust recourse by searching for an $\xcf$ that remains valid under small changes to either the model or to the proposed $\xcf$ itself~\citep{pawelczyk2020predmult, rawal2020modelshift}. The plausibility criterion is often overlooked, and some methods propose to address it by training generative models~\citep{joshi2019revise, pawelczyk2020cchvae, downs2020cruds}.
Most recently, ~\cite{friedbaum2024trustworthyactionableperturbationsTAP} attempts to increase the validity of $\xcf$ by including a separate verifier model that checks whether $(\vx,\xcf)$ belong to different classes after the optimizer generates a candidate $\xcf$.  

One significant limitation of all prior methods is that none of them are \emph{trained} to jointly optimize the three conflicting recourse criteria of validity, proximity, and plausibility. Instead, during inference, they either ignore some recourse objectives or rely on non-robust gradient-based search on the joint objective to balance the trade-offs between them. We show in Figure~\ref{fig:all_and_our_method} that this disconnect between training and inference leads to poor recourse outputs.

In a significant departure from the existing paradigms, we take a generative approach to recourse.  We train a novel recourse likelihood model $\hatPR(\xcf|\vx)$, that when conditioned on any instance $\vx$ seeking recourse, outputs a distribution over likely recourse instances.  Using such a model during inference, we just forward sample recourse instances, unlike existing methods that perform gradient descent optimization to find one $\xcf$.
However, to train such a model we would need several instance pairs \{($\vx_i, \vx'_i): i = 1\ldots N$)\} where $\vx'_i$ is a good recourse for any $\vx_i$ with an unfavorable outcome.  Such direct supervision is lacking. We show how to leverage standard unpaired classification data to sample a training set of instance pairs. We theoretically show that the sampled pairs are consistent, and the error of the expected recourse instance converges at the rate $\mathcal{O}(1/N_+)$ where $N_+$ denotes the number of instances in the positive class.   

In terms of empirical validation,  we first present a visual demonstration of the pros and cons of various methods using three 2-D datasets.  We then evaluate on three large real-life datasets that are popularly used in the recourse literature and compare our results with eight existing methods.  We show that our method achieves (1) the best score combining validity, proximity and plausibility, (2) is more robust to changes in the cost magnitude compared to SOTA likelihood-based methods, and (3) more faithfully learns the conditional density of recourse than methods that separately learn unconditional data density.

\section{Problem Formulation}
\label{sec:prbform}
Let $\xcal \subseteq \RR^d$ represent the input space and $\ycal  = \{0, 1\}$ the output space where the joint distribution between them is $P(X, Y)$.
In the recourse task, one label, say 0 is an unfavorable outcome (e.g., loan rejection) and the other label $\desiredlbl = 1$ denotes a favorable one (e.g., loan approval). 
 We are given a training dataset of $n$ instances $\trnD = \{(\vx_i, y_i)\}_{i=1}^n$ sampled i.i.d. from $P(X,Y)$.  We use $\posD$ and $\negD$ to denote the subset of $\trnD$ with label 1 and 0 respectively.
We assume that we have a pretrained classifier $h: \xcal \mapsto [0,1]$ trained on the available dataset $\trnD$, serves as an approximation for $P(Y=1|X)$. Given an instance $\vx \sim P(X)$, recourse is sought on instances where $h(\vx) < 0.5$. Our objective is to design a mechanism, $\recmdl: \xcal \mapsto \xcal$, that outputs a recourse instance $\xcf$  such that $P(\desiredlbl|\xcf) > 0.5$, while minimizing the cost of shifting $\xcf$ measured in terms of a cost function $C: \xcal \times \xcal \mapsto \RR^+$. \lokesh{$\ell_1$ distance is a popular choice for $C$.} Additionally, the recourse instance $\xcf$ should be representative in the desired class distribution $P(X|\desiredlbl)$, and not an outlier. In real-life applications, weird unrepresentative profiles may not be achievable, and mislead the purpose of recourse. For instance, in domains such as banking, ensuring high $P(\xcf|\desiredlbl)$ is important because an user should not be gaming the system to get loan approval.
\begin{figure}[H]
    \centering
    \includegraphics[width=0.9\linewidth]{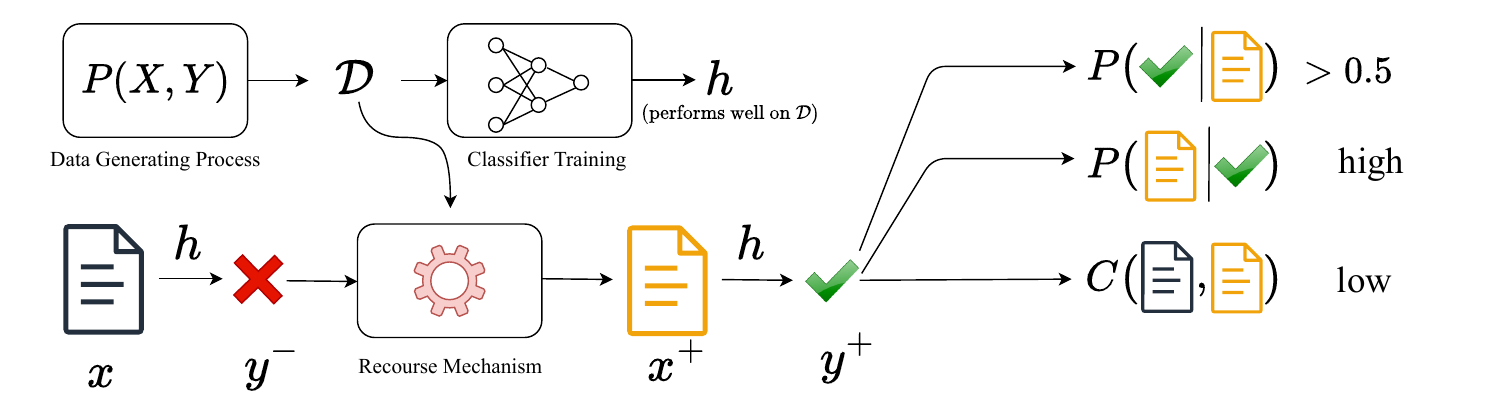}
    \label{fig:probformulation}
    \vspace{-10pt}
    \caption{The recourse pipeline starts with any  instance $\vx$ that received an unfavorable label $h(\vx) = y^-$. The recourse mechanism outputs an alternative $\xcf$ such that $h(\xcf) = y^+$. The user is satisfied as long as $\xcf$ (1) is \textit{valid} i.e., achieves the desired label from $P(y^+|\xcf)$, (2) is \textit{plausible} and actionable in real-life, and (3) is \textit{proximal} to the original $\vx$ to incur low cost.}
\end{figure}
Based on the above considerations an ideal recourse mechanism can be defined as follows:
\begin{equation}
\label{eq:ideal}
\begin{split}
\recmdl(\vx) = \argmin_{\xcf} & ~ \lambda C(\vx,\xcf) - \log P(\xcf|\desiredlbl)
                             ~~\text{s.t.}~ P(\desiredlbl | \xcf) > 0.5
\end{split}
\end{equation}
where $\lambda$ \lokesh{is a balance parameter which} helps to trade-off cost with plausibility.
Additionally, not all features can be altered for $\vx$ -- for example, in loan applications, a recourse mechanism should not suggest recourse where immutable attributes like race are different. We assume that the cost function models immutability and for any two instances $\vx,\vx'$ where immutable attributes are different, $C(\vx,\vx') \rightarrow \infty$. \lokesh{ For $\vx,\vx'$ where immutable attributes are same, we use $\ell_1$ distance as cost as suggested in \cite{wachter2017counterfactual}}.

%\section{Existing Approaches and their Limitations}
% I am renaming this section to related work, so we are not faulted by reviewers looking to match templates.

% \todo{SS: If you look at the numbers, likelihood models do manage to }
% {\renewcommand{\arraystretch}{1.08}%
% \begin{table}[h!]
%     \centering
%     \setlength\tabcolsep{3.2pt}
%     \begin{tabular}{l|c|c|c}
%     \hline
%     \multirow{2}{*}{Method} & \multicolumn{3}{c}{Training Objective} \\ \cline{2-4} 
%  & Proximity & Plausibility & Validity \\ \hline \hline
%     Min Cost & \ding{51} & \ding{55} & \ding{51} \\ \hline
%     Robust Recourse & \ding{51} & \ding{55} & \ding{51} \\ \hline
%     Plausible Recourse & \ding{55} & \ding{51} & \ding{51} \\ \hline
%     \sysname\ (Ours) & \ding{51} & \ding{51} & \ding{51} \\ \hline
%     \end{tabular}
%     \caption{Comparison of methods based on proximity, plausibility, and validity.}
%     \label{tab:method_comparison}
% \end{table}}

\section{Related Work}
\label{sec:existing_problems}
 We will illustrate the working of various methods using three 2D binary classification datasets as shown in Figure~\ref{fig:all_and_our_method}. Training instances are shown in light red (for $\negD$) and blue color (for $\posD$).  Recourse is sought on instances $\lokesh{\vx}$ marked in dark red color and they are connected by an edge to the corresponding recourse instance returned by various methods.   Experimental details of these figures appear in  Section~\ref{sec:experiments}.
% We analyze the failure modes of prior methods using three 2D toy datasets: \texttt{moons}, \texttt{circles} from \texttt{sklearn}, and \texttt{corr}. Further details on these datasets are provided in Appendix \ref{app:syntheticdata}. For all methods, we use their publicly available implementations from CARLA (\cite{pawelczyk2021carla}) with default hyperparameters. The classifier $h$ is trained using a MLP with ReLU activations (details in Appendix \ref{app:hyperparams}). In Fig. \ref{fig:truevsAnn}, we illustrate the boundaries of both $h$ and $\goldcls$. We draw specific attention to $h$ learned on the \texttt{corr} dataset, where we can see a spurious decision boundary towards the lower-left. \todo{L: @P, Pls zoom-in Fig. 1 to show the spurious bdry} 

% \begin{figure}
%     \centering
%     \includegraphics[width=1\linewidth]{images/true_vs_ann_boundary.png}
%     \caption{Top Row: True Classifier Boundary; Bottom Row: Learned Classifier Boundary using a ReLU Network. \todo[inline]{P:replace top row with analytic boundary: even better, superimpose with ann boundary -- how?}}
%     \label{fig:truevsAnn}
% \end{figure}
\begin{figure}[H]
    \centering
    \includegraphics[width=1\linewidth]{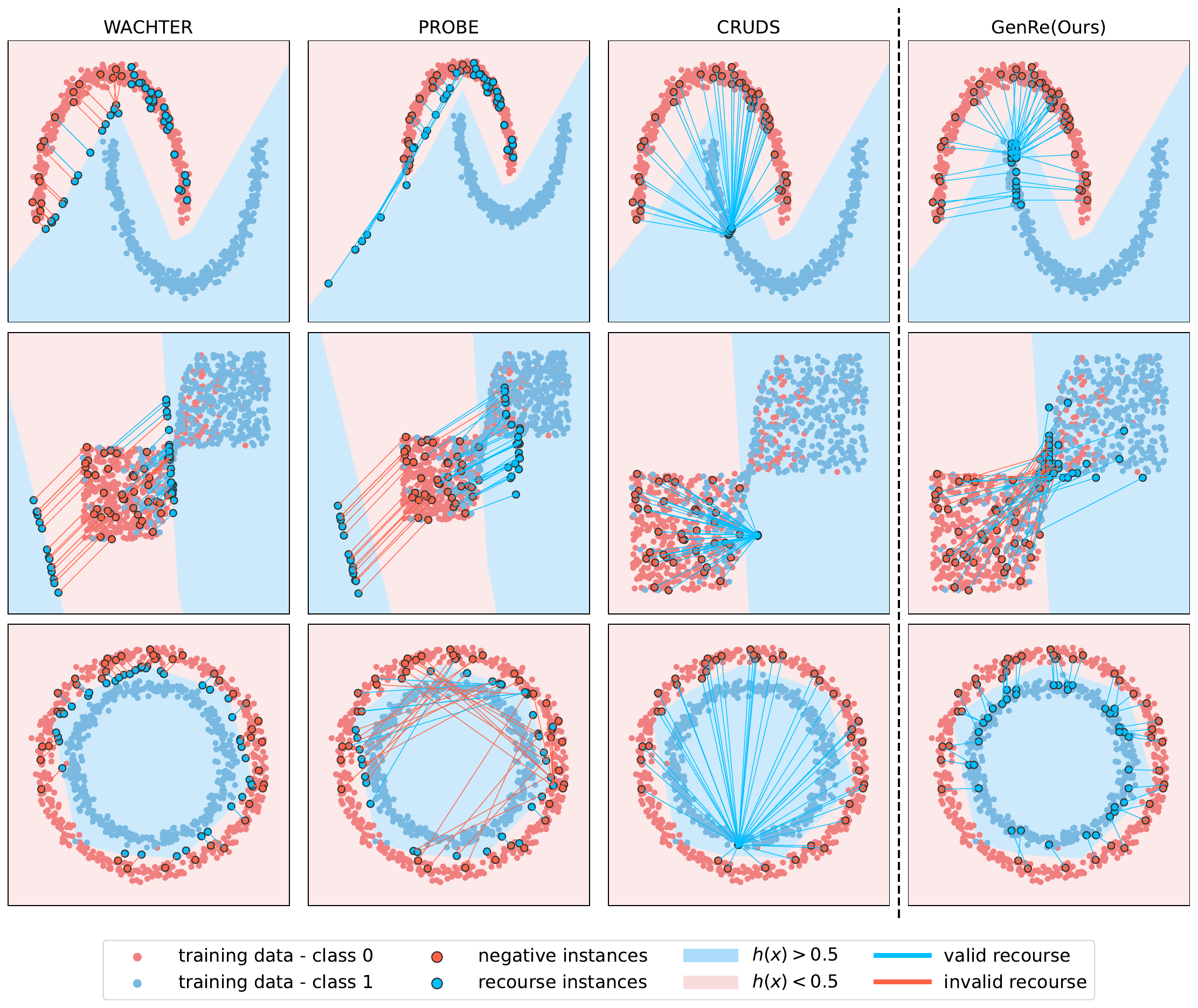}
    \caption{Comparison of different classes of recourse methods.  Training instances are shown in light red and blue colors. Recourse is sought on instances marked in dark red color and they are connected by an edge to the recourse instance they were mapped to.  From left to right:\textbf{(1)} \lokesh{Wachter}, a cost minimizing method maps instances to classifier boundaries away from the blue data distribution.\textbf{(2)} PROBE, a robust recourse method, maps instances away from the boundary and from the  blue data distribution.
    \textbf{(3)} CRUDS, a likelihood based method suffers mode collapse and for the circles dataset strays away from data distribution. to train \textbf{(4)} \sysname(our method) produces plausible recourse instances by being on the blue cloud while also minimizing cost. Recourse instances are also diverse.}
    \label{fig:all_and_our_method}
\end{figure}
We group prior work into three categories based on how they approach the recourse problem. 

% \xhdr{Minimum Cost Recourse} 
\xhdr{Cost Minimizing Methods}
This class of methods search for instances that minimize cost under the constraint that classifier $h$ assigns the desired label, that is, $h(x)>0.5$. The search is performed in various ways: \texttt{GS}~\citep{laugel2017inverseGS} uses a random search, whereas \texttt{Wachter}~\citep{wachter2017counterfactual} uses gradient search on a differentiable objective based on cost and classifier $h$ as follows:
\begin{equation}
\label{eq:mincost_rec}
\begin{split}
\recmdl(\vx) = \argmin_{\xcf} & ~ C(\vx,\xcf)  - \gamma \log (h(\xcf))
%                            ~~\text{s.t.}~ h(\xcf) > 0.5
\end{split}
\end{equation}
\texttt{DICE}~\citep{mothilal2020dice} extends the above objective to include diversity. 
As noted by \cite{fokkema2024risks}, such methods often push recourse instances near the boundary of $h$, as can be seen in Figure~\ref{fig:all_and_our_method} (first column). In particular, these methods suffer because: \textbf{a.)} %Even if $h$ is the final decision making entity, 
Recourse instances that lie on boundary of $h$ have equal probability of being assigned either label and therefore are prone to be invalidated. %In practice learned classifier boundary is different from true decision boundary\footnote{Decision boundary refers to $P(y=1|\vx) = 0.5$ } and thus recourse instances might be invalid.  
\textbf{b.)} Recourse instances can be away from data manifold and thus unrealistic and unreliable since these methods do not consider data density.
%$P_h$ can differ significantly from $P$

Next we discuss two classes of methods which attempt to improve on these shortcomings.

\xhdr{Robust Recourse Methods}
This class of methods trade-off cost in order to generate recourse instances which are more \textit{robust}  by generating recourse instances at some `margin' away from the boundary of $h$.
\texttt{ROAR}~\citep{upadhyay2021ROAR} attempts to output recourse instances which are robust to small changes to the classifier $h$ by optimising for worst case model shift under the assumption that the change in parameters of $h$ is bounded.
% as shown below:
% \todo[inline]{Update the equation. Write ROAR objective here}
% \begin{equation}
% \label{eq:rob_rec}
% \begin{split}
% \recmdl(\vx) = \argmin_{\xcf} & ~ C(\vx,\xcf) 
%                              ~~\text{s.t.}~ h(\xcf) > 0.5 + \Delta\end{split}
% \end{equation}
\texttt{PROBE}~\citep{pawelczyk2022PROBE} attempts to generate instances which are robust to small perturbations to the recourse instances. 
These methods also do not consider data density and thus are prone to generate recourse instances which are unrealistic and unreliable as shown in Figure~\ref{fig:all_and_our_method} (middle column) and may land in low density regions and/or around spurious decision boundaries.
Recently, \texttt{TAP}~\citep{friedbaum2024trustworthyactionableperturbationsTAP} introduces a verifier model which checks whether two given instances belong to the same class. This verifier is used to check if a negative-recourse instance pair belong to different class.
%\todo[inline]{Needs polish}
% \xhdr{Plausible Recourse} 

\xhdr{Plausibility Seeking Methods}
Methods in this class leverage generative models to ensure that they predict plausible recourse by staying close to the training data manifold. % on which the models are trained. 
\texttt{REVISE}~\citep{joshi2019revise}, \texttt{CRUDS}~\citep{downs2020cruds}, and \texttt{CCHVAE}~\citep{pawelczyk2020cchvae} train variants of  VAE~\citep{kingma2013auto} on the training data, and then during inference they either do a gradient search on latent space, or perform rejection sampling on forward samples generated by the VAE. Suppose $D_{\theta}: \mathcal{Z} \to \mathcal{X}$ denotes a VAE decoder, their recourse objective during inference is:
\begin{equation}
\label{eq:likeli_rec}
\recmdl(\vx) = D_{\theta}\left(\argmin_{z} -\log(h(D_{\theta}(z)))+ \lambda\cdot C(\vx,D_{\theta}(z))\right)
\end{equation}
% This optimization is difficult because it must carefully balance the trade-off between the likelihood and the cost term. A larger $\lambda$ could push $z$ outside the VAE prior, resulting in poor recourse outcomes. On the other hand, rejection sampling methods are highly inefficient, as they may require generating numerous samples until hitting an $\vx'$ that meets the proximity constraint.
%\todo[inline]{L: 1. Why will $z$ go out of the prior? 2. Why is the topology unfriendly -- Why did it fail for 2D cases where the topology is easy. 3. Why rejection sampling needs many forward samples? Did my writing with a large $\lambda$ answer these?
%
% PG: 1. when you do gradient step on $z$, you are not supervising the value it can take. 2. Even if we assume the VAE is perfect, data can very weird twists and turns that are difficult to manage for an optimiser. VAE failed on 2d data because VAEs are generally bad with these things.
% 3. CCHVAE model P(x) using VAE not class conditional. https://arxiv.org/pdf/1910.09398. So you produce examples from entire space of data and choose the ones that have the desired labels. On top of it if you include user preferences, you need exponential number of samples.
% }
%
Gradient Search for $z$ on the above objective fails to keep the generated recourse example within the data distribution and suffers from mode collapse as we show in Figure~\ref{fig:all_and_our_method} (Third column). Notice that particularly for the circle dataset, the recourse instances are not at all within the data distribution, and collapse to a single point. %VAEs can produce unrealistic recourse because latent variables might go out of the prior distribution  and can be difficult because of unfriendly data topologies. 
%
%When rejection sampling is used, multiple candidates are sampled from the decoder and lowest cost instance that correctly flips the prediction is returned. To produce a single valid output, rejection sampling may need to produce a lot of candidates and thus can be inefficient.
% \begin{figure}[H]
%     \centering
%     \includegraphics[width=1.0\linewidth]{images/true_vs_ann_boundary.png}
%     \caption{Top Row: True Classifier Boundary; Bottom Row: Learned Classifier Boundary using a ReLU Network}
%     \label{fig:truevsAnn}
% \end{figure}

We provide a more extensive overview of related work in Appendix \ref{app:morerelated}, and present comparisons on real-life data in Section~\ref{sec:experiments}.

\section{Our Approach}

The above discussion highlights that the main challenge in recourse is jointly optimizing three conflicting objectives.  The cost function $C(\vx,\xcf)$ is minimum near $\vx$ where the density $P(\xcf|\desiredlbl)$ of the desired positive class is low.  The learned classifier $h(\xcf)$ could be maximized at regions where positive class density is low, particularly in the presence of spurious boundaries.  The regions of high positive density could be far away from $\vx$ leading to high costs.  
Unlike all prior methods that optimize for these terms during inference, we train a recourse likelihood model $\hatPR(\xcf|\vx)$ that when conditioned on an input negative instance provide a distribution over possible recourse instances $\xcf$. 

We start by rewriting the ideal recourse objective defined in Equation~\ref{eq:ideal} as
\begin{equation}
\label{eq:idealL}
\begin{split}
\recmdl(\vx) = \argmax_{\vx^+} & ~ \exp{(-\lambda C(\vx,\vx^+))}P(\vx^+|\desiredlbl)V(\vx^+) 
\end{split}
\end{equation}
where $V(\vx^+)=
\delta(P(\desiredlbl | \vx^+) > 0.5)$ denotes the desired validity constraint on a recourse instance. Using the above we define the ideal un-normalized recourse likelihood as:
\begin{equation}
    \label{eq:unnormalized_R}
   \PR(\xcf|\vx)\propto \exp{(-\lambda C(\vx,\vx^+))}P(\vx^+|\desiredlbl)V(\vx^+). 
\end{equation}
Our goal during training is to learn a model $\hatPR$ to estimate this ideal recourse density $\PR(\xcf|\vx)$.  The main challenge here is that we are given only individual labeled instances $\trnD=\posD \cup \negD$, whereas what we ideally want is instance-pair sampled from $\PR(\xcf|\vx)$. In Section~\ref{subsec:createpair} we present how we construct such samples from $\trnD$, and then present how we use the constructed pairs to architect and train our estimated model $\hatPR(\xcf|\vx)$. In Section \ref{subsec:modelarch}, we describe one convenient model parameterization which allows for sampling. 
\begin{figure}     \centering
    \includegraphics[width=0.95\linewidth]{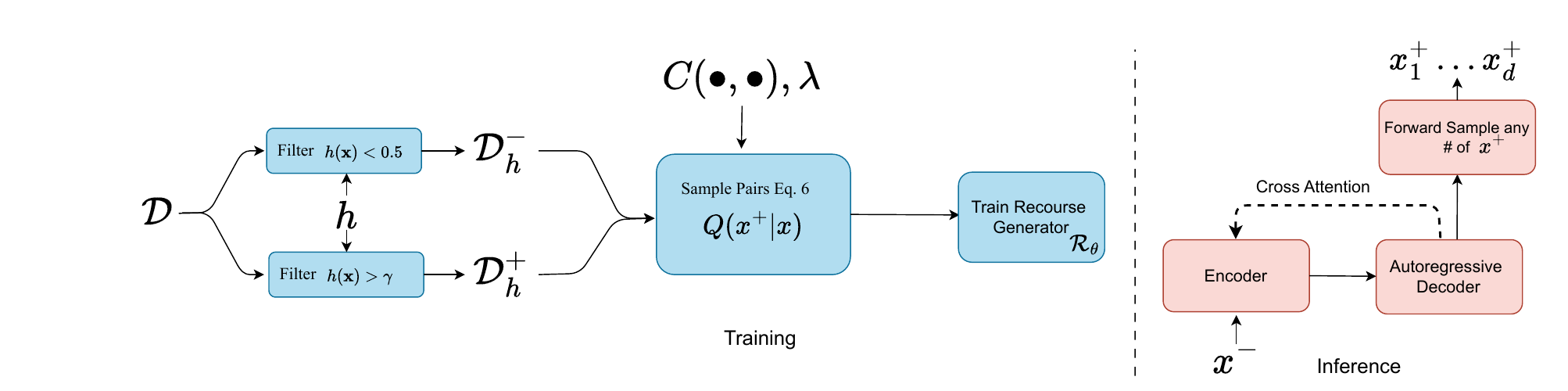}
    \caption{Overview of \sysname. We define an empirical distribution of instance pairs $(\vx, \xcf)$ using training data $\trnD$, classifier $h$, cost function $C$ and balance parameter $\lambda$ to train the recourse model $\hatPR$, an encoder-decoder model. During inference, the given negative instance $\vx$  is fed to the decoder, and recourse instances sampled from the decoder auto-regressively.}
    \label{fig:createpair}
\end{figure}
\subsection{Creating Training Pairs}
\label{subsec:createpair}
We define $\Dminus := \{\vx_i \in \trnD |y_i = 0, \proxycls(x_i) \leq 0.5\}$. These denote training instances where recourse is desired.
Similarly  define $\Dplus := \{\vx_i \in \trnD | y_i = \desiredlbl, \proxycls(x_i) > \gamma\}$ for some chosen $\gamma\geq0.5$. These denote the subset of training examples from the positive class that are also predicted positive by the classifier with confidence $\gamma$.

For each $\vx \in \Dminus$, we define an empirical distribution as follows:
\begin{equation}
\label{eq:Q}
  Q(\xcf|\vx)= \begin{cases}\frac{e^{-\lambda C(\vx, \xcf)}}{\sum_{\xcf \in \Dplus} e^{-\lambda C(\vx, \xcf)}} & \text{if} ~~\xcf \in \Dplus \\
   0 & \text{otherwise}
  \end{cases}
\end{equation}

\lokesh{where $\lambda$ denotes the balance parameter as described in \ref{eq:ideal}}. We show in Section~\ref{sec:theory} that $Q(\xcf|\vx)$ is a consistent estimator of $\PR(\xcf|\vx)$, and the difference in the expected value of the recourse instance $\|\E_Q[\xcf|\vx]-\E_\PR[\xcf|\vx]\|$ reduces at the rate of $\frac{1}{N_+}$, \lokesh{where $N_+ = |\Dplus|$ denotes the number of positive instances,} when $\proxycls(\vx)$ is the actual conditional distribution $P(Y|X)$. 
\prateek{We note that as $\lambda \rightarrow~\infty$, $Q$ will always return the nearest neighbor of $\vx$ in $\Dplus$. In section \ref{sec:nnnr}, we compare our method \sysname~with nearest neighbor search from $\Dplus$.}

\subsection{Architecture and Training of the Generative Model}
\label{subsec:modelarch}
We choose $\hatPR$ to be an auto-regressive generative model so that during inference we only need to perform a forward sampling on the model to obtain recourse instances.
We next describe the architecture of $\hatPR$. We use a transformer-based encoder-decoder architecture. The negative instance $\vx$ is input to the encoder with learned position embeddings.  The decoder defines the output recourse distribution auto-regressively as
$\hatPR(\xcf|\vx) = \prod_{j=1}^d \hatPR(x_j^+|\vx, x_1^+,\ldots x_{j-1}^+)$.  The decoder also uses learned position embeddings and performs cross attention on the encoder states and causal self-attention on decoder states.  
The output conditional distribution for each attribute $x_j^+$ is modeled as a kernel density: 
$$\hatPR(x_j^+|\vx, x_1^+,\ldots x_{j-1}^+) = \sum_{k=1}^{n_j} p_{j,k}^\theta\cdot\mathcal{K}((x_j^+-\mu_{j,k})/{w_{j,k}})$$
Here, $p^{\theta}_{j,k} \geq 0$, such that $\sum_{k} p^{\theta}_{j,k} = 1$, represents the kernel weights output by the transformer, and is implemented as a Softmax layer. 
The means $\mu_{j,k}$ and width $w_{j,k}$ of the $k^{\text{th}}$ component are fixed by binning the $j^{\text{th}}$ attribute in $\Dplus$ into $n_j$ partitions, with $\mu_{j,k}$ as the bin center and $w_{j,k}$ as the bin width.
We use the RBF Kernel. The loss for a paired sample $(\xcf, \vx) \sim Q$ is computed as,
\begin{align}
\mathcal{L}(\vx^+|\vx; \theta) &=
   -\sum_{j=1}^{d} \log \left( \sum_{k=1}^{n_j} p^{\theta}_{j,k} \cdot \mathcal{K}((x_j^+ - \mu_{j,k})/w_{j,k}) \right) \\
    &\leq   
    -\sum_{j=1}^{d} \sum_{k=1}^{n_j} \left(\frac{\lokesh{\mathcal{K}}((x_j^+-\mu_{j,k})/ w_{j,k})}{\sum_{l=1}^{n_j}  \lokesh{\mathcal{K}}((x_l^+-\mu_{l,k})/w_{l,k})} \right) \cdot \log p^{\theta}_{j,k}  + C\label{eq:autoreg_obf}
\end{align}

Here we have used Jensen's inequality to express the loss 
as a simple cross entropy loss on the $ p^{\theta}_j$ vector output by the last softmax layer with the kernel ratios serving as soft-labels.

The overall training process of the model is shown in Figure~\ref{fig:createpair} and outlined in Algorithm~\ref{alg:train}.
\prateek{We emphasize that this is one particular choice of parameterisation, with a note that other models can also be considered. In appendix \ref{app:tabsynsvdd}, we show results on pre-trained diffusion models with guidance to sample from distribution described in \ref{eq:unnormalized_R}.} 

\subsection{Inference}
Once the auto-regressive encoder-decoder is trained, finding recourse on any input negative instance $\vx$ during test time, entails just a simple forward sampling step on $\hatPR$.  We input $\vx$ to the encoder, and sample the recourse instance auto-regressively feature-by-feature.  The full inference method appears in Algorithm~\ref{alg:infer}. Another advantage of our approach is that a user can sample multiple recourse instances and choose which recourse action to implement. 
\subsection{Theoretical Analysis} \label{sec:theory}

\begin{theorem}
\label{thm:IS}
Let $f(\xcf,\vx)$ be any function of $(\xcf,\vx)$.
For $Q(\xcf|\vx)$ defined in Equation~\ref{eq:Q} and $\PR(\xcf|\vx)$ defined in Equation~\ref{eq:idealL}, let $\mu = \E_\PR[f]$ and $\hat{\mu}=\E_Q[f]$. Then $\hat{\mu}$ is a consistent estimate of $\mu$  when $h(\vx)>0.5$ and $P(\desiredlbl|\vx)>0.5$ agree.
\end{theorem}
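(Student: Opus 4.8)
\section*{Proof proposal}

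The plan is to recognize $\hat{\mu}=\E_Q[f]$ as a self-normalized importance-sampling estimator and to show it converges to $\mu=\E_\PR[f]$ by applying the law of large numbers separately to a numerator and a denominator, then taking the ratio. First I would write the estimator explicitly. Since $Q(\xcf|\vx)$ places mass $e^{-\lambda C(\vx,\xcf)}/\sum_{\xcf'\in\Dplus}e^{-\lambda C(\vx,\xcf')}$ on each $\xcf\in\Dplus$,
\begin{equation*}
\hat{\mu}=\frac{\frac{1}{N_+}\sum_{\xcf\in\Dplus} f(\xcf,\vx)\,e^{-\lambda C(\vx,\xcf)}}{\frac{1}{N_+}\sum_{\xcf\in\Dplus} e^{-\lambda C(\vx,\xcf)}},
\end{equation*}
a ratio of two empirical averages over the points of $\Dplus$ (the common $1/N_+$ is inserted to expose the averages).

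Next I would pin down the sampling law of those points. Because $\trnD$ is drawn i.i.d.\ from $P(X,Y)$, the label-$1$ instances are i.i.d.\ from $P(X|\desiredlbl)$, and after the additional filter $h(X)>\gamma$ the members of $\Dplus$ are i.i.d.\ draws from $P(X\mid \desiredlbl,\,h(X)>\gamma)$. Writing $p_\gamma=P(h(X)>\gamma\mid\desiredlbl)>0$, the weak law of large numbers then gives, as $N_+\to\infty$,
\begin{align*}
\tfrac{1}{N_+}\textstyle\sum_{\xcf\in\Dplus} f(\xcf,\vx)\,e^{-\lambda C(\vx,\xcf)} &\xrightarrow{p} \tfrac{1}{p_\gamma}\!\int f(\xcf,\vx)\,e^{-\lambda C(\vx,\xcf)}P(\xcf|\desiredlbl)\,\mathbf{1}\{h(\xcf)>\gamma\}\,d\xcf,\\
\tfrac{1}{N_+}\textstyle\sum_{\xcf\in\Dplus} e^{-\lambda C(\vx,\xcf)} &\xrightarrow{p} \tfrac{1}{p_\gamma}\!\int e^{-\lambda C(\vx,\xcf)}P(\xcf|\desiredlbl)\,\mathbf{1}\{h(\xcf)>\gamma\}\,d\xcf,
\end{align*}
valid since $e^{-\lambda C}\le 1$ makes the weights bounded and $f$ is assumed integrable against the weighted measure, and the denominator limit is strictly positive.

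Then I would invoke Slutsky's theorem on the ratio: the common factor $1/p_\gamma$ cancels, so $\hat{\mu}$ converges in probability to
\begin{equation*}
\frac{\int f(\xcf,\vx)\,e^{-\lambda C(\vx,\xcf)}P(\xcf|\desiredlbl)\,\mathbf{1}\{h(\xcf)>\gamma\}\,d\xcf}{\int e^{-\lambda C(\vx,\xcf)}P(\xcf|\desiredlbl)\,\mathbf{1}\{h(\xcf)>\gamma\}\,d\xcf}.
\end{equation*}
The final step is to identify this with $\mu=\E_\PR[f]$, where $\PR\propto e^{-\lambda C}\,P(\cdot|\desiredlbl)\,V$ and $V(\xcf)=\mathbf{1}\{P(\desiredlbl|\xcf)>0.5\}$. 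This is exactly where the hypothesis enters: when the classifier decision agrees with the Bayes decision, i.e.\ the set $\{h>\gamma\}$ coincides with $\{P(\desiredlbl|\cdot)>0.5\}$ (taking $\gamma=0.5$ with $h$ agreeing with $P(\desiredlbl|\cdot)$ across the threshold), the filter indicator $\mathbf{1}\{h(\xcf)>\gamma\}$ equals $V(\xcf)$, so the two limits coincide and $\hat\mu\xrightarrow{p}\mu$.

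The main obstacle I anticipate is precisely this last identification: reconciling the confidence filter $h(\xcf)>\gamma$ (with $\gamma\ge 0.5$) against the validity region $\{P(\desiredlbl|\cdot)>0.5\}$. The law-of-large-numbers and Slutsky machinery is routine, but the limit equals $\E_\PR[f]$ only when the two induced indicator regions agree, which is the stated assumption; for $\gamma>0.5$ or a miscalibrated $h$ one would incur a bias term governed by the symmetric difference between $\{h>\gamma\}$ and $\{V=1\}$, and I would have to argue this term vanishes (or restrict to $\gamma=0.5$ with calibrated $h$) to preserve consistency. A secondary technical point is ensuring $\Dplus$ is nonempty and the denominator integral positive with probability tending to one, both of which follow from $p_\gamma>0$.
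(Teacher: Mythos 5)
Your proof is correct and takes essentially the same route as the paper: both recognize $\E_Q[f]$ as a self-normalized importance-sampling estimator with proposal $P(X|\desiredlbl)$ and weight $e^{-\lambda C(\vx,\xcf)}$ times a validity indicator, the paper citing the standard consistency of such estimators while you unpack that citation into the explicit law-of-large-numbers-plus-Slutsky argument. Your handling of the confidence filter is in fact slightly more careful than the paper's: you absorb $\mathbf{1}\{h>\gamma\}$ into the sampling law and flag the residual bias when $\gamma>0.5$, whereas the paper silently identifies the $h>\gamma$ filter with the validity region $\{h>0.5\}$.
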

\begin{proof}
Let $D_1$ denote the subset of $\train$ where $y_i=\desiredlbl$.
Substituting the definition of $Q$, it is easy to see that \begin{equation}
\label{eq:muQ}
    \E_Q[f(\bullet|\vx)]=\sum_{(\xcf, 1) \in D_1} f(\xcf,\vx) \frac{e^{-\lambda C(\vx, \xcf)}V(\xcf)}{\sum_{\xcf \in \Dplus} e^{-\lambda C(\vx, \xcf)}V(\xcf)}~~
\end{equation}
\text{if} $V(\xcf)=\delta(P(\desiredlbl|\vx)>0.5)=\delta(h(\vx)>0.5)$.
The above can be seen as a normalized importance weighted estimator for 
$\mu = \E_\PR[f]$ when we treat $P(\xcf|\desiredlbl)$ as a proposal distribution for  $\PR(\xcf|\vx)$ distribution. The proposal distribution is sound since $P(\xcf|\desiredlbl)>0$ whenever $\PR(\xcf|\vx) > 0$ because $\PR$ includes $P(\xcf|\desiredlbl)$ as one of its terms.  The training data $D_1$ is a representative sample from $P(X|\desiredlbl)$.  The unnormalized weight of the importance sampling step $\tilde{w}(\xcf)=\frac{e^{-\lambda C(\vx, \xcf)}V(\xcf)P(\xcf|\desiredlbl)}{P(X|\desiredlbl)}=e^{-\lambda C(\vx, \xcf)}V(\xcf)$. Substituting these in Eq~\ref{eq:muQ} we see that $\E_Q[f]$ is a self-normalized importance weighted estimate, which is well-known to be a consistent estimator when proposal is non-zero at support point of target. \end{proof}

\begin{theorem}   
    The difference in the expected value of the counterfactual $\|\E_Q[\xcf|\vx]-\E_\PR[\xcf|\vx]\|$ reduces at the rate of $\frac{1}{N_+}$ when
    $\proxycls(\vx)$ is the actual conditional distribution $P(Y|X)$.
\end{theorem}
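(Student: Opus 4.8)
The plan is to leverage the identification, already established in the preceding theorem, of $\E_Q[\xcf|\vx]$ as a self-normalized importance sampling (SNIS) estimate of $\E_\PR[\xcf|\vx]$: the proposal is the positive-class density $P(\xcf|\desiredlbl)$, from which $\Dplus$ is a size-$N_+$ i.i.d.\ sample, and the unnormalized importance weight is $w(\xcf)=e^{-\lambda C(\vx,\xcf)}V(\xcf)$. Since $\E_Q[\xcf|\vx]$ is itself random in the draw of $\Dplus$, the rate $\tfrac{1}{N_+}$ can only refer to the \emph{bias} of this estimator (its variance, and hence $\E\,\norm{\cdot}$, decays only at $\tfrac{1}{\sqrt{N_+}}$); accordingly I read the claim as $\norm{\E_{\Dplus}[\E_Q[\xcf|\vx]]-\E_\PR[\xcf|\vx]}=O(\tfrac{1}{N_+})$, which is the classical delta-method bias rate for a ratio of two unbiased sample means. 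Writing $\hat A=\tfrac{1}{N_+}\sum_{\xcf_i\in\Dplus}w(\xcf_i)\,\xcf_i$ and $\hat B=\tfrac{1}{N_+}\sum_{\xcf_i\in\Dplus}w(\xcf_i)$, we have $\E_Q[\xcf|\vx]=\hat A/\hat B$, while $\E_\PR[\xcf|\vx]=A/B$ with $A=\E_{P(\cdot|\desiredlbl)}[w(\xcf)\,\xcf]$ and $B=\E_{P(\cdot|\desiredlbl)}[w(\xcf)]$.

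The hypothesis that $h(\vx)>0.5$ and $P(\desiredlbl|\vx)>0.5$ agree is exactly what forces the empirical weight $w$ to coincide with the true importance weight, so that $\hat A$ and $\hat B$ are \emph{unbiased}: $\E_{\Dplus}[\hat A]=A$ and $\E_{\Dplus}[\hat B]=B$. I would then write $\hat A=A+\delta_A$, $\hat B=B+\delta_B$ with $\E[\delta_A]=\E[\delta_B]=0$, and expand
\[
\frac{\hat A}{\hat B}=\frac{A+\delta_A}{B}\left(1-\frac{\delta_B}{B}+\frac{\delta_B^2}{B^2}-\cdots\right)=\frac{A}{B}+\frac{\delta_A}{B}-\frac{A\,\delta_B}{B^2}-\frac{\delta_A\delta_B}{B^2}+\frac{A\,\delta_B^2}{B^3}+\cdots.
\]
Applying $\E_{\Dplus}[\cdot]$ annihilates the two first-order terms by unbiasedness, leaving the second-order delta-method contribution $\tfrac{A}{B^3}\E[\delta_B^2]-\tfrac{1}{B^2}\E[\delta_A\delta_B]$. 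Since $\Dplus$ is i.i.d.\ of size $N_+$, both $\E[\delta_B^2]=\Var(\hat B)$ and $\E[\delta_A\delta_B]=\Cov(\hat A,\hat B)$ equal $\tfrac{1}{N_+}$ times a per-sample (co)variance that is a fixed constant for a fixed query $\vx$, giving $\norm{\E_{\Dplus}[\E_Q[\xcf|\vx]]-\E_\PR[\xcf|\vx]}=O(\tfrac{1}{N_+})$ as claimed.

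The hard part is making the truncation of the expansion rigorous, since dividing by the random $\hat B$ is only safe when $\hat B$ stays away from $0$. I would handle this by exploiting that the weights are bounded, $0<w(\xcf)\le 1$ because $C\ge 0$ and $V\in\{0,1\}$, so every moment of $w$ is finite and $B>0$ is a fixed constant for fixed $\vx$. A Bernstein/Hoeffding bound then shows $\hat B$ concentrates around $B$, so on the high-probability event $\{\hat B\ge B/2\}$ a finite Taylor expansion with Lagrange remainder bounds the cubic-and-higher terms by $O(\tfrac{1}{N_+^2})$; on the complementary event, whose probability is exponentially small in $N_+$, the ratio is controlled crudely using the boundedness of $w$ together with a finite second moment of $\xcf$ under $P(\cdot|\desiredlbl)$. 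Combining the two events preserves the $\tfrac{1}{N_+}$ leading term and absorbs everything else into the remainder.
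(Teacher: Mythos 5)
Your proof is correct, and it shares the same launching point as the paper's: the identification (via Theorem 3.1 with $f(\xcf,\vx)=\xcf$) of $\E_Q[\xcf|\vx]$ as a self-normalized importance sampling estimate with proposal $P(\xcf|\desiredlbl)$ and unnormalized weights $e^{-\lambda C(\vx,\xcf)}V(\xcf)$. From there, however, the routes genuinely differ. The paper's proof is two lines: it cites the textbook asymptotic variance of SNIS estimators (Koller and Friedman, Ch.~12), namely $\Var \approx \frac{1}{|D_1|}\Var_{\PR}[f]\bigl(1+\Var_{P}[e^{-\lambda C(\vx,X)}V(X)]\bigr)$, and reads ``rate $1/N_+$'' as the rate of this \emph{variance}. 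You instead read the claim as a statement about the \emph{bias} $\norm{\E_{\Dplus}[\E_Q[\xcf|\vx]]-\E_\PR[\xcf|\vx]}$ and prove the $O(1/N_+)$ rate by a second-order delta-method expansion of the ratio $\hat A/\hat B$ of unbiased sample means, using boundedness of the weights and Hoeffding concentration of $\hat B$ away from zero to control the truncation error and the bad event. Your reading is arguably the one under which the theorem's displayed quantity --- a norm of a difference, which you correctly note must be a difference of \emph{expectations} to be deterministic --- literally decays at $1/N_+$; under the paper's reading, the variance decays at $1/N_+$ but the error norm itself only decays at $1/\sqrt{N_+}$ in root-mean-square, a distinction the paper glosses over. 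Your argument also makes explicit where the hypothesis $h=P(Y|X)$ enters (it forces the empirical weights to coincide with the true importance weights, hence $\hat A$, $\hat B$ unbiased), which the paper leaves implicit, and it is self-contained where the paper leans on a citation. The trade-off: the paper's proof is shorter and addresses the statistical-fluctuation side; yours is longer but rigorous and addresses the systematic-error side.
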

\begin{proof}
Using $f(\xcf,\vx) = \xcf$, the proof of Theorem~\ref{thm:IS} showed that  $\E_Q[\xcf|\vx]$ is a self-normalized importance sampling estimate of $\E_\PR[\xcf|\vx]$. The variance of this estimate for a given $\vx$ is approximately $\frac{1}{|D_1|}\text{Var}_{\PR(X|\vx)}[f(X,\vx)](1+\text{Var}_{P(X|\desiredlbl)}[e^{-\lambda C(\vx, X)}V(X)])$ which
reduces at the rate $\frac{1}{|D_1|}$~(\cite{koller2009probabilistic}, Chapter 12).
\end{proof}

\section{Experimental Results}
We next present empirical comparisons of our generative recourse model with several prior recourse methods.  We already presented a visual comparison on synthetic datasets in Figure~\ref{fig:all_and_our_method}. In this section we focus on real datasets.
\label{sec:experiments}
\subsection{Experimental Setup}
\textbf{Datasets:} We experimented with benchmark datasets commonly used to evaluate recourse algorithms: Adult Income~\citep{UCIadult}, FICO HELOC~\citep{ficoFICOCommunityheloc}, and COMPAS~\citep{propublicaMachineBias}.  These datasets contain a mix of continuous and categorical features.
All continuous attributes are normalized to the range of $[0,1]$, while categorical attributes are one-hot encoded.  We defer detailed descriptions of these datasets to Appendix~\ref{app:realdata} and present a summary in Table~\ref{tab:data}.
For each dataset, we train a Random Forest (RF) classifier to mimic the {\em latent} decision-making model, which assigns the gold labels. No method has access to this classifier during training; instead, they have access to training data $\trnD$ sampled from it. This classifier is used to evaluate the validity of recourse output by various methods. We ensure that the RF classifier is calibrated by using the \verb|CalibratedClassifierCV| API from \verb|sklearn|~\citep{scikit-learn}.  

\textbf{Baselines:} We compare our method with eight prior recourse methods covering each of the three category of prior methods already described in Section~\ref{sec:existing_problems}:
\texttt{Wachter}~\citep{wachter2017counterfactual}, \texttt{GS}~\citep{laugel2017inverseGS},  \texttt{DICE}~\citep{mothilal2020dice}, \texttt{ROAR}~\citep{upadhyay2021ROAR}, \texttt{PROBE}~\citep{pawelczyk2022PROBE}, \texttt{REVISE}~\citep{joshi2019revise}, \texttt{CRUDS}~\citep{downs2020cruds}, and \texttt{CCHVAE}~\citep{pawelczyk2020cchvae}.
\texttt{TAP}~\citep{friedbaum2024trustworthyactionableperturbationsTAP}. 
For standardized comparison, we used their public implementation from CARLA recourse library\footnote{\href{https://github.com/carla-recourse}{github.com/carla-recourse/CARLA}}~\citep{pawelczyk2021carla}. 

\textbf{Implementation Details.} 
For the labeled dataset $\trnD$, we adopt the features from the real data as is and assign labels sampled from the RF classifier. The classifier $h(x)$ is an Artificial Neural Network (ANN) -- a ReLU-based model with three hidden layers of size 10 each, trained with a learning rate of 0.001 for 100 epochs using a batch size of 64. The accuracy of $h(x)$ is reported in Table~\ref{tab:data}.

\begin{table}[h!]
\centering
\setlength\tabcolsep{4.5pt} \renewcommand{\arraystretch}{1.2} \begin{tabular}{lrrrrrrrr}
\toprule
Dataset & \#Feat. & \#Cat. & \#Immut. & \#Pos & \#Neg & \#Train & \#Test & ANN accuracy \\ 
\midrule
Adult Income   & 13         & 7              & 2            & 8,742       & 27,877   &36624 & 12208 & 77.33    \\
COMPAS  & 7          & 4              & 2            & 3,764       & 865   &4629 & 1543 & 69.60       \\
HELOC   & 21         & 0              & 0            & 3,548       & 3,855   &7403 &2468 & 74.23     \\ 
\bottomrule
\end{tabular}
\caption{\label{tab:data} Data Statistics along with accuracy of ANN classifier}
\end{table}For training $\hatPR$, we use a Transformer~\citep{vaswani2017transformer} from PyTorch~\citep{torchpaper} with learned position embedding, embedding size 32, and 16 layers in each of encoder and decoder, and 8 heads.  The number of bins in the last layer is 50.   We choose the value of $\lambda=5.0$ when sampling training pairs.
During inference (Algorithm~\ref{alg:infer}), we set the temperature for bin selection $\tau=10.00$ and $\sigma=0.00$, generate 10 samples and choose the sample which gets highest probability from the classifier $h(x)$. \prateek{In Appendix~\ref{subsec:temp_sig}, we provide results over other values of $\tau$ and $\sigma$}. We describe other relevant hyperparameters in Appendix~\ref{app:hyperparams}.

\textbf{Performance Metrics.} We evaluate the performance of a recourse method on a test set $\{\vx_i\}_{i=1}^m$ consisting of $m$ negative instances using the following metrics:

\begin{enumerate}[leftmargin=0.4cm,itemsep=0pt]    \item \textit{Cost}:      We define cost as the $\ell_1$ distance between the negative instance $\vx$ and its corresponding recourse instance $\xcf$.  \item \textit{Val}: Validity measures if the recourse was successful, that is,  the RF classifier assigns $y^+$.
    \item \textit{LOF}: LOF\citep{Breunig2000LOF} assesses how plausible or representative the recourse instance is. We report the fraction of recourse instances which were assigned as inliers by this module.
     \item \textit{Score}: To evaluate all methods using a single metric, we define \textit{Score} as \textit{Score} = \textit{Val} + \textit{LOF} - $\frac{\textit{Cost}}{d}$, where $d$ is the number of features in the dataset. Note that the maximum possible value of \textit{Score} is 2.

\end{enumerate}

\begin{table}[H]
\centering
\setlength\tabcolsep{1.3pt} \renewcommand{\arraystretch}{1.0} \resizebox{\linewidth}{!}{

\begin{tabular}{lrrrrrrrrrrrr}
\toprule
\textbf{Dataset} & \multicolumn{4}{c}{\textbf{Adult Income}}                                                                                                               & \multicolumn{4}{c}{\textbf{COMPAS}}                                                                                                              & \multicolumn{4}{c}{\textbf{HELOC}}                                                                                                               \\ \cmidrule(lr){2-5} \cmidrule(lr){6-9} \cmidrule(lr){10-13}
\textbf{Method}  & \textbf{Cost}~$\downarrow$ & \textbf{Val}~$\uparrow$ & \textbf{LOF}~$\uparrow$ & \textbf{Score}~$\uparrow$ & \textbf{Cost}~$\downarrow$ & \textbf{Val}~$\uparrow$ & \textbf{LOF}~$\uparrow$ & \textbf{Score}~$\uparrow$ & \textbf{Cost}~$\downarrow$ & \textbf{Val}~$\uparrow$ & \textbf{LOF}~$\uparrow$ & \textbf{Score}~$\uparrow$ \\ 
\midrule
Wachter          & 0.31          & 0.51          & 0.76         & 1.24         & 0.20          & 0.59          & 0.23         & 0.79         & 0.79          & 0.35          & 0.97         & 1.28         \\
GS               & 1.82          & 0.40          & 0.49         & 0.75         & 1.20          & 0.66          & 0.48         & 0.97         & 0.58          & 0.32          & 0.94         & 1.23         \\
DICE             & 0.22          & 0.62          & 0.73         & 1.33         & 0.19          & 0.57          & 0.44         & 0.99         & 0.49          & 0.34          & 0.97         & 1.29         \\
\midrule
ROAR             & 10.17         & 0.96          & 0.01         & 0.19         & 4.01          & 0.87          & 0.01         & 0.30         & 9.64          & 0.47          & 0.17         & 0.19         \\
PROBE            & 33.91         & 1.00          & 0.00         & -1.61        & 5.77          & 0.82          & 0.00         & -0.00        & 5.37          & 0.69          & 0.07         & 0.49         \\
TAP            & 1.10         & 0.99          & 0.67         & 1.57        & 1.46          & 0.88          & 0.70         & 1.37        & 0.71	&0.36	&0.53	&0.86         \\
\midrule
CCHVAE           & 2.11          & 0.00          & 1.00         & 0.84         & 3.03          & 1.00          & 0.07         & 0.64         & 3.58          & 0.04          & 0.14         & 0.02         \\
CRUDS            & 3.17          & 1.00          & 0.96         & 1.72         & 1.10          & 0.98          & 1.00         & 1.83         & 4.30          & 1.00          & 0.57         & 1.37         \\
\midrule
\sysname~  & 0.69          & 1.00          & 0.98         & \textbf{1.93} & 0.51          & 0.99          & 0.97         & \textbf{1.89} & 2.01          & 1.00          & 1.00         & \textbf{1.90} \\ 
\bottomrule
\end{tabular}
}
\caption{\label{tab:main_table} Comparing different recourse mechanisms on three real-life datasets. For each mechanism, we report \textit{cost}, \textit{validity}, \textit{LOF}, and a combined \textit{Score}=\textit{Val}+\textit{LOF}-\textit{Cost}/$d$. \sysname\ provides the best score across all datasets, and is close to 2, the maximum achievable score.}
\end{table}

\subsection{Overall comparison with baselines}
Table~\ref{tab:main_table} presents a comparison of our method \sysname\ with several other baselines on the cost, validity, and LOF metrics, and an overall combined score. 
From this table we can make a number of important observations:    
\begin{enumerate*}[(1)]
\item First, observe that our method \sysname\ consistently provides competent performance across all metrics on all three datasets. \sysname's average score across the three datasets is the highest. 
\item If we compare on cost alone, the first three methods (Wachter, GS and DICE) that ignore plausibility, achieve lower cost than \sysname\ but they provide poor validity (close to 0.5). This is possibly because they optimize for validity on the learned $h(\vx)$ which may differ from the gold classifier.
\item The next two methods (ROAR and PROBE) choose more robust instances with a margin, and thus they achieve much higher validity.  However, they struggle with choosing good margins, and the recourse instances they return are very far from $\vx$ as seen by the abnormally high costs. Also, these methods mostly yield outliers for recourse as seen in the low LOF scores. TAP, which trains an auxiliary verifier model to select highly valid recourse instances during inference, outperforms ROAR and PROBE, achieving good validity scores in Adult and COMPAS. However, it falls short in other recourse metrics, resulting in a consistent subpar overall score.\item The next two methods CCHVAE and CRUDS incorporate plausibility with a VAE, and of these CRUDS provides competitive LOF and validity values but it generally incurs higher cost than our method.
\end{enumerate*}

\subsection{Generating recourse from a recourse likelihood model Vs Optimizing jointly during inference.}
We present a more in-depth comparison of \sysname, our training-based recourse method, with CRUDS that emerged as the second best in the comparisons above. CRUDS includes all three objectives of validity, proximity, and plausibility like in \sysname\ but combines them during inference.  In the CRUDS implementation\footnote{The inference method described in the CRUDS paper differs from their implementation in the CARLA library}, inference entails solving an objective like in Equation~\ref{eq:likeli_rec} but with some set of $z$ fixed. Thus, $\lambda$ in this method also allows control of the magnitude of cost like in \sysname's objective (Equation~\ref{eq:idealL}). No matter what the $\lambda$, an actionable recourse method should always generate plausible instances from the data manifold in order to be taken seriously in real-life.  The $\lambda$ can be used to tradeoff expense Vs guarantee of achieving the target label.  From this perspective we compare how CRUDS and \sysname\ respond to changing cost magnitude $\lambda$. 
We perform experiments with  $\lambda \in \{0.5,1.0,2.5,5.0,10.0\}$.
In the Figure below we plot the value of $-\text{cost}(\vx,\xcf)/d$ on the X-axis against validity in the first row and against LOF in the second row.  Each point denotes a specific $\lambda$.  From these graphs we can make the following interesting observations: 
\begin{figure}[H]
    \centering
    \includegraphics[width=0.9\linewidth]{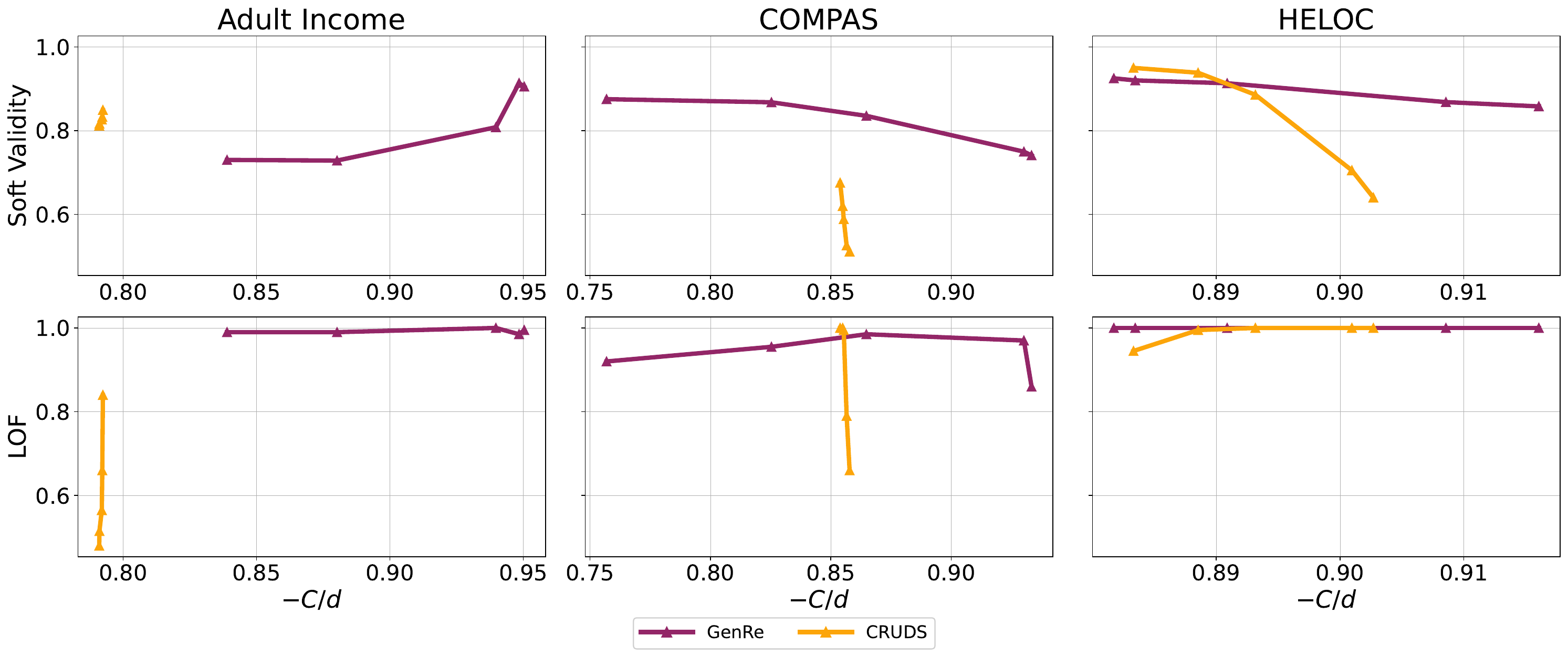}
    \caption{Comparing \sysname\ with \texttt{CRUDS} for different values of balance parameter $\lambda \in \{0.5,1.0,2.5,5.0,10.0\}$.~Note that $x$-axis is on exponential scale. Top: Comparing soft validity. Bottom: Comparing fraction of recourse instances that were inliers.  
    \sysname~provides better trade-offs than \texttt{CRUDS} with changing cost: \sysname\ always returns plausible instances and tradesoff validity gradually with cost. \texttt{CRUDS} shows huge swings in validity and plausibility with changing $\lambda$. }
    \label{fig:lklvsCond}
\end{figure}
\begin{enumerate*}[(1)]
\item On all datasets we observe that \sysname\ consistently provides plausible recourse instances as seen by the high LOF scores across all $\lambda$ values, even while cost increases.  In contrast, except for the HELOC dataset, the LOF scores of CRUDS swings significantly with changing $\lambda$ even though the cost stays the same. 
\item The validity values change gradually with changing $\lambda$ whereas in CRUDS on two of the datasets COMPAS and HELOC, we observe much greater swings.
\end{enumerate*}
\subsection{Ablation: Role of conditional likelihood}
Apart from the joint training of recourse likelihood, another explanation for the superior performance of \sysname\ is that our approach involves training the {\em conditional} likelihood of recourse instances $\PR(\xcf|\vx)$. The conditioning on input negative instances results in a density that is more tractable to learn than the
unconditional data density of $P(X|y^+)$ that needs to capture the entire data manifold of positive instances.
We show this visually on two synthetic datasets from Figure~\ref{fig:all_and_our_method}.
For each dataset, we train two density models: $\hatPR(\vx^+|\vx)$ and $P(X|y^+)$. Both these densities are learned using transformer models.  
Since $P(X|y^+)$ does not need an encoder, its transformer 
uses twice the number of layers as the conditional one. 
We show the contours of these two conditional densities in Figure~\ref{fig:lkl_pm_density}. The large red dot in the plot represents the negative instance $\vx$ on which the density is conditioned.

\begin{figure}[H]
    \centering
    \includegraphics[width=0.9\linewidth]{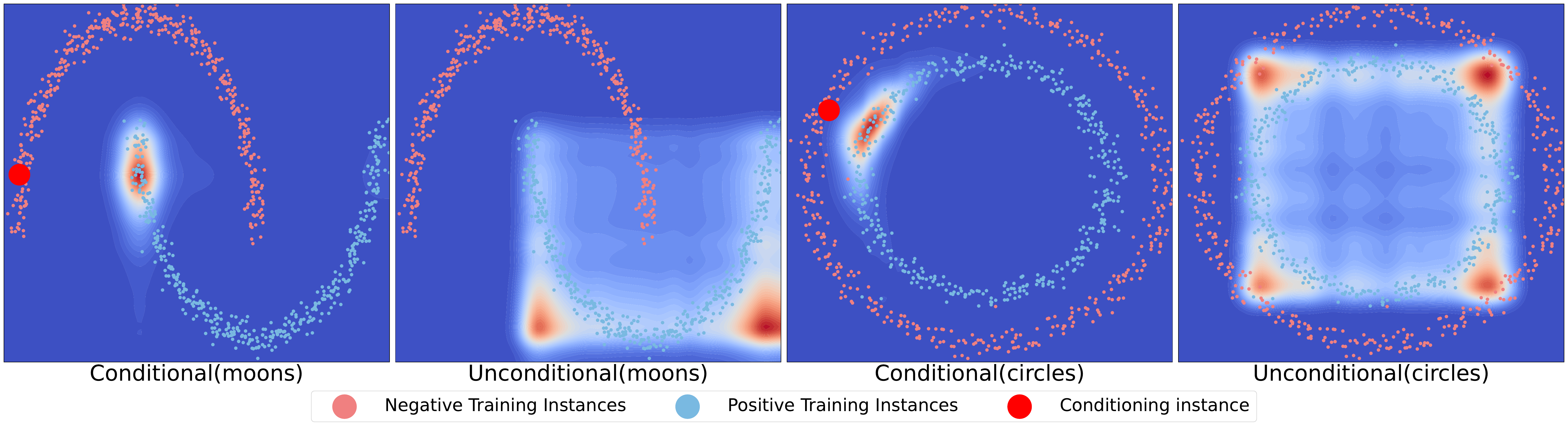}
    \caption{Visual Comparison between contours of density learned by conditional model (odd positions) and unconditional model (even positions)}
    \label{fig:lkl_pm_density}
\end{figure}
  We can see vividly that the conditional density of $\hatPR(\xcf|\vx)$  is concentrated around an optimal region on the data manifold that meets all three recourse goals.  In contrast, the unconditional density of the blue training points from the positive class is subpar. 
  Such density models when combined with cost and validity objectives during inference are less likely to yield recourse on the data manifold. 

\prateek{
\subsection{Comparison with Nearest Neighbor Search}}
\label{sec:nnnr}
As $\lambda \rightarrow \infty$, $Q$ (as defined in Eq.~\ref{eq:Q}) will always return the nearest neighbor of $\vx$ in $\Dplus$ as defined in Section~\ref{subsec:createpair}. In this section we establish the advantages of \sysname~over nearest neighbor search. 
We consider three variants: (a) NNR (Nearest Neighbor Recourse) that selects the closest neighbor which has the desired class label. (b) NNR $(\gamma>0.7)$ that selects recourse instances which are assigned a confidence of more than 0.7 from classifier $\proxycls$, and finally, (c)  NNR $(\Dplus)$ that returns nearest recourse instance  which in addition to having a confidence of $\gamma > 0.7$ by the classifier $h$, also enforces that observed label in the training data is $y = 1$. We present the results in table \ref{tab:nnr_comparison}
\begin{table}[H]

\centering
\setlength\tabcolsep{1.3pt} \renewcommand{\arraystretch}{1.0} \resizebox{\linewidth}{!}{
\begin{tabular}{lrrrrrrrrrrrr}
\toprule
\textbf{Dataset} & \multicolumn{4}{c}{\textbf{Adult Income}}                                                                                                               & \multicolumn{4}{c}{\textbf{COMPAS}}                                                                                                              & \multicolumn{4}{c}{\textbf{HELOC}}                                                                                                               \\ \cmidrule(lr){2-5} \cmidrule(lr){6-9} \cmidrule(lr){10-13}
\textbf{Method}  & \textbf{Cost}~$\downarrow$ & \textbf{Val}~$\uparrow$ & \textbf{LOF}~$\uparrow$ & \textbf{Score}~$\uparrow$ & \textbf{Cost}~$\downarrow$ & \textbf{Val}~$\uparrow$ & \textbf{LOF}~$\uparrow$ & \textbf{Score}~$\uparrow$ & \textbf{Cost}~$\downarrow$ & \textbf{Val}~$\uparrow$ & \textbf{LOF}~$\uparrow$ & \textbf{Score}~$\uparrow$ \\ 
\midrule
NNR & 0.36 & 0.48 & 0.89 & 1.34 & 0.17 & 0.86 & 0.88 & 1.71 & 1.22 & 0.58 & 1.00 & 1.53\\
NNR $(\gamma=0.7)$ & 0.46 & 0.72 & 0.84 & 1.53 & 0.43 & 0.99 & 0.97 & \textbf{1.90} & 1.52 & 0.69 & 1.00 & 1.62\\
NNR $(\Dplus)$ & 0.47 & 0.85 & 0.89 & 1.71 & 0.45 & 0.99 & 0.97 & \textbf{1.90} & 1.57 & 0.96 & 0.99 & 1.88\\ \midrule
GenRe & 0.69 & 1.00 & 0.98 & \textbf{1.93} & 0.51 & 0.99 & 0.97 & 1.89 & 2.01 & 1.00 & 1.00 & \textbf{1.90}\\\bottomrule
\end{tabular}
}
\caption{Comparison of Nearest Neighbor Search with \sysname. \sysname~ performs much better in term of LOF across all the datasets.}
\label{tab:nnr_comparison}
\end{table}

We make the following observations:
(1) Among the NNR methods, a consistent trend emerges: NNR $(\gamma > 0.7)$ consistently outperforms standard NNR, while NNR $(\Dplus)$ outperforms NNR $(\gamma > 0.7)$ across all recourse metrics, except cost. This underscores the value of incorporating both constraints when providing recourse and further justifies the pairing approach adopted by \sysname.
(2) On the Adult Income and HELOC datasets, \sysname\ outperforms all variations of NNR in terms of the overall score, while achieving comparable performance on the COMPAS dataset. Note that NNR completely ignores the plausibility of instances and therefore succumbs to outliers, as reflected in their low LOF scores. \sysname, on the contrary, trades-off cost to provide instances which are more plausible. In Table~\ref{tab:nnr_comparison_full}, we also compare \sysname~ across a range of balance parameter $\lambda$.

\vspace{0.2cm}

\subsection{Comparison with Pre-trained Diffusion Models}\label{app:tabsynsvdd}

 Recent work on guidance in diffusion models allows for sampling from distribution of the form \ref{eq:unnormalized_R} using pre-trained models. To investigate  if \sysname~which trains with pairs has any advantage over such methods, we experiment with the current best performing diffusion model for tabular data, TabSyn~\citep{tabsyn} and constrained it with a state-of-the-art derivative-free guidance method SVDD~\citep{li2024SVDD}.
To ensure a fair comparison, we train the diffusion model only on $\Dplus$ as described in section \ref{subsec:createpair}. We compare with two methods: (1) TabSyn+$Q$: We implement a semi-ideal, brute-force method in which we sample a dataset $\mathcal{D}^{+}_{syn}$, which is of the same size as original dataset. For a given negative instance $\vx^-$, define $Q$ on $\mathcal{D}^{+}_{syn}$ as described in Eq.~\ref{eq:Q}. This experiments sets a skyline for guidance methods. (2) TabSyn+SVDD: SVDD requires specifying a downstream `reward’ function for conditional generation but does not require this function to be differentiable. If we let the reward function to be negative of cost, we recover the distribution in Eq.~\ref{eq:unnormalized_R}. 

\begin{table}[H]
\centering
\setlength\tabcolsep{1.3pt} \renewcommand{\arraystretch}{1.0} \resizebox{\linewidth}{!}{
\begin{tabular}{lrrrrrrrrrrrr}
\toprule
\textbf{Dataset} & \multicolumn{4}{c}{\textbf{Adult Income}}                                                                                                               & \multicolumn{4}{c}{\textbf{COMPAS}}                                                                                                              & \multicolumn{4}{c}{\textbf{HELOC}}                                                                                                               \\ \cmidrule(lr){2-5} \cmidrule(lr){6-9} \cmidrule(lr){10-13}
\textbf{Method}  & \textbf{Cost}~$\downarrow$ & \textbf{Val}~$\uparrow$ & \textbf{LOF}~$\uparrow$ & \textbf{Score}~$\uparrow$ & \textbf{Cost}~$\downarrow$ & \textbf{Val}~$\uparrow$ & \textbf{LOF}~$\uparrow$ & \textbf{Score}~$\uparrow$ & \textbf{Cost}~$\downarrow$ & \textbf{Val}~$\uparrow$ & \textbf{LOF}~$\uparrow$ & \textbf{Score}~$\uparrow$ \\ 
\midrule
TabSyn+$Q$ & 0.86                  & 1.00         & 0.79         & 1.72           & 0.78            & 0.99         & 0.81         & 1.69           & 2.41           & 0.99         & 0.99         & 1.87            \\
TabSyn+SVDD$(\lambda=5.0)$ & 3.11                  & 0.99         & 0.83         & 1.58           & 2.18            & 1.00         & 0.69         & 1.37           & 3.14           & 0.98         & 0.97         & 1.8             \\
TabSyn+SVDD$(\lambda=10.0)$ &3.10                  & 1.00         & 0.87         & 1.63           & 2.20            & 1.00         & 0.79         & 1.48           & 2.85           & 0.99         & 0.99         & 1.84            \\ \midrule
GenRe($\lambda$ = 5.0) & 0.69 & 1.00 & 0.98 & \textbf{1.93} & 0.51 & 0.99 & 0.97 & \textbf{1.89} & 2.01 & 1.00 & 1.00 & \textbf{1.90} \\ \bottomrule
\end{tabular}
}
\label{tab:tabsyn_svdd}
\caption{\sysname~ outperforms pre-trained Diffusion models with guidance}
\end{table}
While TabSyn+$Q$ performs similarly to \sysname~on the HELOC dataset, it performs worse on the Adult Income and COMPAS datasets, particularly struggling with the LOF metric. TabSyn+SVDD performs significantly worse on both datasets in terms of cost as well as the LOF metric. Additionally, we observed that in many instances, TabSyn+SVDD failed to generate recourse instances that satisfy the immutability constraints.

\section{Conclusion}
In this work, we proposed a model \sysname\ to maximize likelihood of recourse over instances that meet the three recourse goals of validity, proximity, and plausibility. We demonstrated that methods that are not trained {\em jointly} with the three recourse goals fail to achieve all of them during inference. We confirmed this empirically on toy 2D datasets and three standard recourse benchmarks across eight state-of-the-art recourse baselines.  Another interesting property of \sysname\ is that it generates recourse just via forward sampling from the trained recourse model, unlike most existing methods that perform expensive and non-robust gradient descent search during inference.    The main challenge we addressed in developing \sysname\ was  training the generator given lack of direct recourse supervision. We addressed this by designing a pairing strategy that pairs each negative instance instance seeking recourse with plausible positive recourse instances in the training data. We proved these pairs are consistent, and hence \our's method of training on them is consistent with the ideal recourse objective. Our experiments and sensitivity analysis further demonstrate \our's ability to optimally balance the three recourse criteria, while remaining robust across a wide range of hyperparameters.      

\section*{Acknowledgements}
We acknowledge the support of the SBI Foundation Hub for Data Science \& Analytics, and the Centre for Machine Intelligence and Data Science (C-MInDS) at the Indian Institute of Technology Bombay for providing financial support and infrastructure for conducting the research presented in this paper.

\bibliography{iclr2025_conference}
\bibliographystyle{iclr2025_conference}

\appendix
\newpage
\section{Detailed Related Work}
\label{app:morerelated}
\textbf{Cost minimizing Methods.} \cite{laugel2017inverseGS}  suggests a random search algorithm, which generates samples around the factual input point until a point with a corresponding counterfactual class label was found. The random samples are generated around $\vx$ using growing hyper spheres. 
\cite{ustun2019actionable} formulates a discrete optimization problem on user-specified constraints and uses integer programming solvers such as \texttt{CPLEX} or \texttt{CBC}.
\cite{karimi2020mace} translates the problem as satisfiability problem which can be solved using off the shelve SMT-solvers. 
\cite{wachter2017counterfactual} formulates a differentiable objective using cost and classifier and  generates counterfactual explanations by minimizing the objective function using gradient descent
\cite{mothilal2020dice} extends the differentiable objective with a diversity constraint to the objective uses gradient descent to find a solution that trades-off cost and diversity

\textbf{Robust Methods.}
Several recent methods acknowledge that deployed models in practice often change with time \cite{rawal2020modelshift}, and as a consequence recourse generated using the classifier used during training can be invalidated by the deployed model. This calls for a need to {\em robust} recourse. \cite{upadhyay2021ROAR}
finds counterfactual in worst case model shifts under the assumption of bounded change in parameter space. \cite{hamman2023naturalmodelshift} challenges this assumption and instead defines ``natural" model shifts where model prediction do not change drastically in-distribution of training data.

Another line of work considers robustness in the case with respect to small perturbations to the input. As noted by \cite{fokkema2024risks} most recourse algorithms map instances to close to the boundary of classifiers . If user implementation of recourse is noisy, the recourse can get invalidated. \cite{pawelczyk2022PROBE} address the problem by optimising for expected label after recourse under normal noise. Recourse can also be noisy due to underlying causal model. \cite{karimi2020algorithmic}. \cite{dominguez2022arar} suggest a minmax objective which reduces to outputting a point some `margin' away from the boundary in the case of linear classifiers. Recent work \cite{friedbaum2024trustworthyactionableperturbationsTAP} trains auxiliary verifier models to ensure robustness in recourse validity.
\cite{black2021consistentSNS} also considers recourse validity in case of a model shift and ensures that the generated counterfactuals are in smooth regions of classifier

\textbf{Likely Recourse Methods.}
\cite{joshi2019revise} use a VAE(\cite{kingma2013auto}) to ensure that the generated recourse instances are close to the data-manifold and perform gradient descent in latent space to optimise the recourse instance. \cite{pawelczyk2020cchvae} perform random search in latent space and filter out the instances which correctly flip the predictions. \cite{downs2020cruds} extends the method to include a class conditional VAE which can be incorporated in CCHVAE and REVISE. 

\cite{schut2021uncertain} asserts that the recourse instances should belong to the region where classification model has high certainty. They estimate the uncertainty using an ensemble of models and include that in their objective. \cite{antoran2020clue} considers the case of Bayesian Neural Networks and estimate the uncertainty by sampling multiple models from the model distribution. 

\xhdr{Others} More recent work includes \cite{kanamori2024learningDT} which attempts to learn decision tree classifiers which facilitate more desirable recourse, as opposed to us where we assume that the classifiers are given to us. \cite{lakkarajuSocialSeggregation} attempts to generate recourse which is invariant across multiple subgroups. \cite{bewley2024counterfactualmetaruleslocalglobal} attempts to generate human readable rules from a suggested recourse instances. 

We note that quite a few methods~\citep{upadhyay2021ROAR,pawelczyk2022PROBE, ustun2019actionable, dominguez2022arar} are originally developed for linear models but are extended to non-linear models like neural networks by using local linear approximation using \texttt{LIME} coefficients \cite{ribeiro2016lime}
\section{Algorithms}
In this section, we specify the training and inference algorithm for the autoregressive model used in the paper.
\begin{algorithm}[H]
\caption{\label{alg:train} Training Recourse model $\hatPR$}
\begin{algorithmic}[1]
\Require Training data $\trnD$, lrn rate $\eta$, batch size $b$, epochs $e$, sampling parameter $K$, cost coefficient $\lambda$, pair validity $\gamma$, classifier $h$
\Ensure Trained model  $\hatPR$ with parameters $\theta$
\State $\Dminus \gets \{\vx_i \in \trnD |y_i = 0, \proxycls(\vx_i) \leq 0.5\}$; 
~$\Dplus \gets \{\vx_i \in \trnD | y_i = \desiredlbl, \proxycls(\vx_i) > \gamma\}$
\State $\hatPR \gets \text{Initialize~} \{\hatPR^{\text{encoder}}, \hatPR^{\text{decoder}}\}$ 
\State $Q_{\text{trunc}}(\bullet | \vx) = \texttt{TopK}\left(Q\left(\bullet|\vx, \Dplus\right), K\right)$  \Comment{\lokesh{Truncate $Q$ (Eq. \ref{eq:Q}) to Top $K$ entries for $\vx \in \Dminus$}}
\For{epoch $\in [e]$}
    \For{each minibatch $B$ of size $b$ from $\mathcal{D}_h^-$}
        \State Sample $\vx_i' \sim Q_{\text{trunc}}(\vx_i)$  for each $\vx_i \in B$, 
        \State $\mathcal{L}(\theta) = $ Sum of loss on $(\vx_i,\vx_i')$ using Eq. \ref{eq:autoreg_obf};  
                $\theta \leftarrow \texttt{GradDescStep}(\mathcal{L}, \eta)$
    \EndFor
\EndFor
\end{algorithmic}
\end{algorithm}
\begin{algorithm}[H]
\caption{\label{alg:infer}Forward Sampling one Recourse Instance}
\begin{algorithmic}[1]
\Require negative instance $\vx^-$, categorical \lokesh{indices} $m_c$, recourse model $\hatPR =\{\hatPR^{\text{enc}}, \hatPR^{\text{dec}}\}$, temperature $\tau$, bin means $\{\mu_{j, k}\}$, bin \lokesh{width} $\{\sigma_{j, k}\}$
\Ensure predicted recourse $\vx'$
\State $\vx_{\text{enc}}^- \gets \hatPR^{\text{enc}}(\lokesh{\vx^-})$ \Comment{Encode the input neg. instance}
\State $\vx' \gets [0, 0, \ldots 0]$ \Comment{Init. $\vx'$ with $d$ zeros}
\For{$j\in [d]$}
        \State $p_j = \texttt{SoftMax}\left(\tau \cdot \hatPR^{\text{dec}}\left(\vx_{\text{enc}}^-, \vx'_{1:j-1}\right) \right)$ \Comment{$\hatPR^{\text{dec}}\left(\vx_{\text{enc}}^-, \vx'_{1:j-1}\right)$ is the logits over $n_j$ bins}
        \State Sample $\eps \sim \mathcal{N}(0,1), k \sim p_j$
        \State $x'_j \gets \mu_{j,k} + \sigma_{j, k} \cdot \eps$ 
\EndFor
\State $\vx' \gets \texttt{projectCategoricals}(\vx',m_c)$ \Comment{\lokesh{project $m_c$ indices in $\vx'$ to categorical space.}}
\end{algorithmic}
\end{algorithm}

\section{Experimental Details}
\label{app:experimental details}
\subsection{Datasets}
\label{app:realdata}
We use the preprocessed datasets from CARLA library where all categorical features from original datasets have been converted to binary categorical features, for example, place of native country has been converted to US, Non-US, race into white and non-white etc

The Adult data set \cite{UCIadult} originates from the 1994 Census database, consisting of 13
attributes and 48,832 instances. The classification consists of deciding whether an individual has
an income greater than 50,000 USD/year.  The train split has 36624 examples and test split has 12208 examples. The features sex and race are set as immutable. Categorical features in this dataset are workclass, marital-status, occupation, relationship, race, sex, native-country.

The HELOC dataset \citep{ficoFICOCommunityheloc} contains anonymized information about the Home Equity Line of Credit
applications by homeowners in the US, with a binary response indicating whether or not the applicant
has even been more than 90 days delinquent for a payment. The dataset consists of 9871 rows and 21
features. The train split has 7403 examples and test split has 2468 examples. All features in this dataset are continuous and mutable.

The COMPAS data set~\citep{propublicaMachineBias} contains data for criminal defendants in Florida, USA. It is
used by the jurisdiction to score defendant’s likelihood of reoffending. The classification task
consists of classifying an instance into high risk of recidivism (score). The dataset consists of 6172 rows and 7
features.  The train split has 4629 examples and test split has 1543 examples.
Immutable features for COMPAS are sex and race. Categorical features are two\_year\_recid, c\_charge\_degree, race and sex.

\begin{table}[H]
\centering
\renewcommand{\arraystretch}{1.2} \begin{tabular}{lcccccc}
\toprule
Dataset & \#Features & \#Categoricals & \#Immutables & \#Positives & \#Negatives \\ 
\midrule
Adult Income  & 13         & 7              & 2            & 8,742       & 27,877       \\
COMPAS  & 7          & 4              & 2            & 3,764       & 865          \\
HELOC   & 21         & 0              & 0            & 3,548       & 3,855        \\ 
\bottomrule
\end{tabular}
\caption{Training Data Statistics}
\end{table}

We normalise each feature in every dataset to range $[0,1]$.
\subsection{Hyperparameters} \label{app:hyperparams}
\subsubsection{True Classifiers}
We train a RandomForestClassifier calibrated with CalibratedClassifierCV using isotonic calibration. We  generate binary labels by sampling from the predictive probabilities for each input. We also provide sample weights to the \verb|.fit| function.
The data is split into train and test after getting true classifiers and below we report various metrics on this test split.
This input and its corresponding generated label will be utilized as training data for the ANN classifiers.

\begin{table}[H]
    \centering
    \begin{tabular}{lcccccc}
        \toprule
        Dataset & Accuracy & ROC AUC & Precision & Recall & F1-Score & Briar Score \\
        \midrule
        Adult Income & 94.36 & 0.9996 & 0.90 & 0.96 & 0.93 & 0.05 \\
        COMPAS & 85.74 & 0.9886 & 0.78 & 0.91 & 0.81 & 0.10 \\
        HELOC & 100.00 & 1.0000 & 1.00 & 1.00 & 1.00 & 0.02 \\
        \bottomrule
    \end{tabular}
    \caption{Performance Metrics for the trained classifiers. Precision, Recall and F1-score are macro averaged. Briar Score determines the quality of calibration -- lower the better.}
\end{table}

\subsubsection{Predictive Models}

We use train fully connected ReLU models with 10,10,10 layers using learning rate=0.001 and number of epochs =100, batch size = 64
\begin{table}[H]
    \centering
    \begin{tabular}{lccccc}
        \toprule
        Dataset & Accuracy & ROC AUC & Precision & Recall & F1-Score \\
        \midrule
        Adult Income & 77.33 & 0.86 & 0.76 & 0.78 & 0.76 \\
        COMPAS & 69.60 & 0.75 & 0.68 & 0.69 & 0.68 \\
        HELOC & 74.23 & 0.80 & 0.74 & 0.74 & 0.74 \\
        \bottomrule
    \end{tabular}
    \caption{Performance Metrics for the trained classifiers. Precision, Recall and F1-score are macro averaged}
\end{table}

\subsubsection{Baselines}
\begin{table}[H]
\centering
\begin{tabular}{lll}
\toprule
\textbf{Method} & \textbf{Hyperparameters} & \textbf{Other Params} \\ \midrule
\texttt{Wachter} & \begin{tabular}[c]{@{}l@{}}
    loss\_type: ``BCE" \\
    binary\_cat\_features: True
\end{tabular} & - \\ \midrule
\texttt{GS} & - & - \\ \midrule
\texttt{DiCE} & \begin{tabular}[c]{@{}l@{}}
    loss\_type: ``BCE" \\
    binary\_cat\_features: True
\end{tabular} & - \\ \midrule
\texttt{ROAR} & \begin{tabular}[c]{@{}l@{}}
    delta: 0.01
\end{tabular} & - \\ \midrule
\texttt{PROBE} & \begin{tabular}[c]{@{}l@{}}
    loss\_type: ``BCE" \\
    invalidation\_target: 0.05 \\
    noise\_variance: 0.01
\end{tabular} & - \\ \midrule
\texttt{TAP} & \begin{tabular}[c]{@{}l@{}}
    n\_iter: 1000 \\
    step\_size: 0.01
\end{tabular} & \begin{tabular}[c]{@{}l@{}}
    Verifier Params: \\
    layers: {[}50,50,50,50{]} \\
    epochs: 2000 \\
    lr: 0.0001 \\
    batch\_size: 128
\end{tabular} \\ \midrule
\texttt{CCHVAE} & \begin{tabular}[c]{@{}l@{}}
    n\_search\_samples: 100 \\
    step: 0.1 \\
    max\_iter: 1000
\end{tabular} & \begin{tabular}[c]{@{}l@{}}
    VAE Params: \\
    layers: {[}512, 256, 8{]} \\
    lambda\_reg: 0.000001 \\
    epochs: 100 \\
    lr: 0.001 \\
    batch\_size: 32
\end{tabular} \\ \midrule
\texttt{CRUDS} & \begin{tabular}[c]{@{}l@{}}
    lambda\_param: 0.001 \\
    optimizer: ``RMSprop" \\
    lr: 0.008 \\
    max\_iter: 2000
\end{tabular} & \begin{tabular}[c]{@{}l@{}}
    VAE Params: \\
    layers: {[}512, 256, 8{]} \\
    train: True \\
    epochs: 100 \\
    lr: 0.001 \\
    batch\_size: 32
\end{tabular} \\ \bottomrule
\end{tabular}
\caption{Hyperparameters used for baseline methods}
\end{table}
\subsubsection{\sysname}
For each feature in the input we have a learnable position embedding which we then pass into the transformer model.
We use the same architecture across all the datasets, below we describe the various parameters.
\begin{table}[H]
    \centering
    \begin{tabular}{lr}
        \toprule
        Parameter & Value \\
        \midrule
        Number of Bins & 50\\
        Number of layers in Encoder & 16 \\
        Number of layers in Decoder & 16 \\
        Embedding Size & 32 \\
        Feed Forward Dim & 32 \\
        \bottomrule
    \end{tabular}
    \caption{Architecture specifications for the Transformer Layer used.}
\end{table}
All datasets uses learning rate = 1e-4 and batch size 16384 for Adult Income dataset else 2048.
\subsection{Evaluation}
We use entire dataset train+test, with predicted label assigned by the true classifiers and train a class conditional LocalOutlierFactor from sklearn. We use \verb|n_neighbours=5, novelty = True|. Results shown in \ref{tab:main_table} are averaged over 200 instances from test split.
\section{Additional Experiments}
\subsection{Comparison with Baselines along with Standard Deviation}

\begin{table}[H]
\centering
\resizebox{\linewidth}{!}{\begin{tblr}{
  column{5} = {r},
  column{9} = {r},
  column{13} = {r},
  cell{1}{2} = {c=4}{c},
  cell{1}{6} = {c=4}{c},
  cell{1}{10} = {c=4}{c},
  cell{2}{2} = {c},
  cell{2}{3} = {c},
  cell{2}{4} = {c},
  cell{2}{6} = {c},
  cell{2}{7} = {c},
  cell{2}{8} = {c},
  cell{2}{10} = {c},
  cell{2}{11} = {c},
  cell{2}{12} = {c},
  vline{2,6,10} = {2-11}{},
  hline{1,12} = {-}{0.08em},
  hline{2-3,6} = {-}{},
}
Dataset                   & Adult Income &           &           &       & COMPAS    &           &           &       & HELOC     &           &           &       \\
Metric                    & Cost         & Val       & LOF       & Score & Cost      & Val       & LOF       & Score & Cost      & Val       & LOF       & Score \\
Wachter                   & 0.31±0.16    & 0.51±0.50 & 0.76±0.43 & 1.24  & 0.20±0.17 & 0.59±0.49 & 0.23±0.42 & 0.79  & 0.79±0.47 & 0.35±0.48 & 0.97±0.17 & 1.28  \\
GS                        & 1.82±1.07    & 0.40±0.49 & 0.49±0.50 & 0.75  & 1.20±0.73 & 0.66±0.47 & 0.48±0.50 & 0.97  & 0.58±0.38 & 0.32±0.47 & 0.94±0.25 & 1.23  \\
DICE                      & 0.22±0.10    & 0.62±0.49 & 0.73±0.45 & 1.33  & 0.19±0.16 & 0.57±0.49 & 0.44±0.50 & 0.99  & 0.49±0.31 & 0.34±0.47 & 0.97±0.16 & 1.29  \\
ROAR                      & 10.17±2.62   & 0.96±0.18 & 0.01±0.07 & 0.19  & 4.01±2.01 & 0.87±0.34 & 0.01±0.07 & 0.30  & 9.64±6.89 & 0.47±0.50 & 0.17±0.38 & 0.19  \\
TAP                       & 1.10±0.93    & 0.99±0.10 & 0.67±0.47 & 1.57  & 1.46±0.59 & 0.88±0.32 & 0.70±0.46 & 1.37  & 0.71±0.26 & 0.36±0.48 & 0.53±0.50 & 0.86  \\
PROBE                     & 33.91±14.70  & 1.00±0.00 & 0.00±0.00 & -1.61 & 5.77±4.63 & 0.82±0.38 & 0.00±0.00 & -0.00 & 5.37±5.12 & 0.69±0.46 & 0.07±0.25 & 0.49  \\
CCHVAE                    & 2.11±1.07    & 0.00±0.00 & 1.00±0.00 & 0.84  & 3.03±0.87 & 1.00±0.00 & 0.07±0.26 & 0.64  & 3.58±0.63 & 0.04±0.21 & 0.14±0.35 & 0.02  \\
CRUDS                     & 3.17±1.11    & 1.00±0.00 & 0.96±0.18 & 1.72  & 1.10±0.82 & 0.98±0.12 & 1.00±0.00 & 1.83  & 4.30±2.23 & 1.00±0.00 & 0.57±0.50 & 1.37  \\
GenRe                     & 0.69±0.30    & 1.00±0.00 & 0.98±0.12 & 1.93  & 0.51±0.18 & 0.99±0.10 & 0.97±0.17 & 1.89  & 2.01±0.59 & 1.00±0.00 & 1.00±0.00 & 1.90  \\
\end{tblr}
}
\caption{Comparison with baselines along with standard deviation}
\label{tab:comparison_table}
\end{table}

\subsection{Performance across different Temperature $\tau$ and $\sigma$ settings}
\label{subsec:temp_sig}
\begin{table}[H]
\centering
\resizebox{1\linewidth}{!}{\begin{tblr}{
  column{5} = {r},
  column{9} = {r},
  column{13} = {r},
  cell{1}{2} = {c=4}{c},
  cell{1}{6} = {c=4}{c},
  cell{1}{10} = {c=4}{c},
  cell{2}{2} = {c},
  cell{2}{3} = {c},
  cell{2}{4} = {c},
  cell{2}{6} = {c},
  cell{2}{7} = {c},
  cell{2}{8} = {c},
  cell{2}{10} = {c},
  cell{2}{11} = {c},
  cell{2}{12} = {c},
  vline{2,6,10} = {2-20}{},
  hline{1,21} = {-}{0.08em},
  hline{2-3,6,9,12,15,18} = {-}{},
}
Dataset      & Adult Income &           &           &       & COMPAS    &           &           &       & HELOC     &           &           &       \\
$\tau$, $\sigma$ & Cost         & Val       & LOF       & Score & Cost      & Val       & LOF       & Score & Cost      & Val       & LOF       & Score \\
5.0, 2e-07   & 0.73±0.32    & 0.99±0.10 & 0.94±0.25 & 1.87  & 0.54±0.21 & 0.99±0.10 & 0.86±0.34 & 1.78  & 2.12±0.62 & 1.00±0.00 & 1.00±0.00 & 1.9   \\
10.0, 2e-07  & 0.70±0.32    & 0.99±0.07 & 0.97±0.16 & 1.92  & 0.50±0.18 & 1.00±0.00 & 0.96±0.20 & 1.89  & 2.00±0.57 & 1.00±0.00 & 1.00±0.00 & 1.9   \\
15.0, 2e-07  & 0.68±0.30    & 0.99±0.07 & 0.99±0.10 & 1.93  & 0.50±0.18 & 1.00±0.00 & 0.96±0.20 & 1.89  & 1.95±0.57 & 1.00±0.00 & 1.00±0.00 & 1.91  \\
5.0, 2e-06   & 0.72±0.30    & 0.97±0.16 & 0.96±0.20 & 1.88  & 0.54±0.19 & 0.99±0.10 & 0.88±0.32 & 1.79  & 2.08±0.58 & 1.00±0.00 & 1.00±0.00 & 1.9   \\
10.0, 2e-06  & 0.69±0.30    & 0.99±0.07 & 0.99±0.10 & 1.93  & 0.51±0.18 & 0.99±0.07 & 0.97±0.16 & 1.9   & 2.00±0.58 & 1.00±0.00 & 1.00±0.00 & 1.9   \\
15.0, 2e-06  & 0.68±0.29    & 0.99±0.10 & 0.98±0.14 & 1.92  & 0.50±0.18 & 1.00±0.00 & 0.96±0.18 & 1.89  & 1.96±0.57 & 1.00±0.00 & 1.00±0.00 & 1.91  \\
5.0, 2e-05   & 0.73±0.33    & 0.99±0.10 & 0.97±0.17 & 1.9   & 0.53±0.20 & 0.99±0.07 & 0.89±0.31 & 1.81  & 2.08±0.61 & 1.00±0.00 & 1.00±0.00 & 1.9   \\
10.0, 2e-05  & 0.69±0.30    & 0.98±0.12 & 0.97±0.16 & 1.91  & 0.51±0.18 & 0.99±0.07 & 0.96±0.20 & 1.88  & 2.00±0.60 & 1.00±0.00 & 1.00±0.00 & 1.9   \\
15.0, 2e-05  & 0.68±0.30    & 0.99±0.10 & 0.98±0.14 & 1.92  & 0.50±0.18 & 1.00±0.00 & 0.97±0.17 & 1.9   & 1.95±0.56 & 1.00±0.00 & 1.00±0.00 & 1.91  \\
5.0, 2e-4    & 0.73±0.34    & 0.99±0.07 & 0.94±0.24 & 1.88  & 0.53±0.19 & 0.99±0.10 & 0.88±0.33 & 1.79  & 2.10±0.60 & 1.00±0.00 & 1.00±0.00 & 1.9   \\
10.0, 2e-4   & 0.70±0.32    & 0.99±0.07 & 0.97±0.16 & 1.92  & 0.51±0.18 & 1.00±0.00 & 0.96±0.20 & 1.89  & 2.00±0.58 & 1.00±0.00 & 1.00±0.00 & 1.9   \\
15.0, 2e-4   & 0.68±0.29    & 1.00±0.00 & 0.98±0.14 & 1.93  & 0.50±0.18 & 1.00±0.00 & 0.97±0.16 & 1.9   & 1.94±0.55 & 1.00±0.00 & 1.00±0.00 & 1.91  \\
5.0, 2e-3    & 0.74±0.37    & 0.97±0.17 & 0.93±0.26 & 1.84  & 0.54±0.20 & 0.99±0.07 & 0.84±0.36 & 1.76  & 2.12±0.60 & 1.00±0.00 & 1.00±0.00 & 1.9   \\
10.0, 2e-3   & 0.69±0.30    & 1.00±0.00 & 0.99±0.10 & 1.94  & 0.51±0.18 & 1.00±0.00 & 0.95±0.21 & 1.88  & 1.99±0.56 & 1.00±0.00 & 1.00±0.00 & 1.91  \\
15.0, 2e-3   & 0.69±0.32    & 0.99±0.10 & 0.98±0.14 & 1.92  & 0.50±0.18 & 1.00±0.00 & 0.95±0.21 & 1.88  & 1.94±0.56 & 1.00±0.00 & 1.00±0.00 & 1.91  \\
5.0, 2e-2    & 0.75±0.32    & 0.97±0.16 & 0.92±0.28 & 1.83  & 0.56±0.19 & 0.98±0.12 & 0.77±0.42 & 1.67  & 2.26±0.62 & 0.98±0.12 & 1.00±0.00 & 1.88  \\
10.0, 2e-2   & 0.72±0.29    & 0.94±0.23 & 0.93±0.26 & 1.82  & 0.55±0.18 & 0.98±0.12 & 0.79±0.41 & 1.69  & 2.22±0.59 & 0.98±0.14 & 1.00±0.00 & 1.87  \\
15.0, 2e-2   & 0.73±0.30    & 0.94±0.23 & 0.92±0.28 & 1.8   & 0.55±0.18 & 0.99±0.07 & 0.78±0.41 & 1.7   & 2.18±0.60 & 0.98±0.14 & 1.00±0.00 & 1.88  
\end{tblr}
}
\caption{Performance of \sysname~across $\tau \{5.0,10.0,15.0\}$ and $\sigma\{2e-7, 2e-6, \dots, 2e-2\}$}
\end{table}

\subsection{Comparison with Nearest Neighbor Search along with standard deviation}
\begin{table}[H]
\centering
\resizebox{\linewidth}{!}{\begin{tblr}{
  column{5} = {r},
  column{9} = {r},
  column{13} = {r},
  cell{1}{2} = {c=4}{c},
  cell{1}{6} = {c=4}{c},
  cell{1}{10} = {c=4}{c},
  cell{2}{2} = {c},
  cell{2}{3} = {c},
  cell{2}{4} = {c},
  cell{2}{6} = {c},
  cell{2}{7} = {c},
  cell{2}{8} = {c},
  cell{2}{10} = {c},
  cell{2}{11} = {c},
  cell{2}{12} = {c},
  vline{2,6,10} = {2-10}{},
  hline{1,11} = {-}{0.08em},
  hline{2-3,6} = {-}{},
}
Dataset                   & Adult Income &           &           &       & COMPAS    &           &           &       & HELOC     &           &           &       \\
Metric                    & Cost         & VaL       & LOF       & Score & Cost      & VaL       & LOF       & Score & Cost      & VaL       & LOF       & Score \\
NNR                       & 0.36±0.20    & 0.48±0.50 & 0.89±0.32 & 1.34  & 0.17±0.16 & 0.86±0.35 & 0.88±0.33 & 1.71  & 1.22±0.41 & 0.58±0.49 & 1.00±0.00 & 1.53  \\
NNR$(\gamma=0.7)$       & 0.46±0.24    & 0.72±0.45 & 0.84±0.36 & 1.53  & 0.43±0.19 & 0.99±0.10 & 0.97±0.16 & \textbf{1.9  } & 1.52±0.50 & 0.69±0.46 & 1.00±0.00 & 1.62  \\
NNR$(\gamma=0.7, y=1)$  & 0.47±0.24    & 0.85±0.35 & 0.89±0.31 & 1.71  & 0.45±0.21 & 0.99±0.10 & 0.97±0.16 & \textbf{1.9}   & 1.57±0.53 & 0.96±0.20 & 0.99±0.07 & 1.88  \\
GenRe($\lambda$ = 0.5)  & 2.28±0.86    & 0.99±0.10 & 0.99±0.10 & 1.8   & 1.95±0.69 & 1.00±0.00 & 0.92±0.27 & 1.64  & 2.64±0.71 & 1.00±0.00 & 1.00±0.00 & 1.87  \\
GenRe($\lambda$ = 1.0)  & 1.66±0.91    & 0.92±0.28 & 0.99±0.10 & 1.78  & 1.34±0.58 & 1.00±0.00 & 0.95±0.21 & 1.76  & 2.60±0.68 & 1.00±0.00 & 1.00±0.00 & 1.88  \\
GenRe($\lambda$ = 2.5)  & 0.81±0.39    & 0.89±0.31 & 1.00±0.00 & 1.83  & 1.02±0.58 & 0.98±0.12 & 0.98±0.12 & 1.82  & 2.43±0.68 & 1.00±0.00 & 1.00±0.00 & 1.88  \\
GenRe($\lambda$ = 5.0)  & 0.69±0.30    & 1.00±0.00 & 0.98±0.12 & \textbf{1.93}  & 0.51±0.18 & 0.99±0.10 & 0.97±0.17 & \textbf{\color{blue}{1.89}}  & 2.01±0.59 & 1.00±0.00 & 1.00±0.00 & \textbf{\color{blue}{1.9}   }\\
GenRe($\lambda$ = 10.0) & 0.66±0.28    & 0.98±0.14 & 0.99±0.07 & \textbf{\color{blue}{1.92}}  & 0.48±0.19 & 0.95±0.21 & 0.86±0.35 & 1.75  & 1.84±0.55 & 1.00±0.00 & 1.00±0.00 & \textbf{1.91  }
\end{tblr}
}
\caption{\textbf{Best} is highlighted in bold and \color{blue}{\textbf{second best}}  is highlighted bold blue}
\label{tab:nnr_comparison_full}
\end{table}

\subsection{Details on Experiment with TabSyn and SVDD}
SVDD requires a reward function to guide diffusion model such as TabSyn towards a required distribution.
For a given diffusion model which can sample from $P(x)$, SVDD allows us to sample from a distribution $P'(x)$ such that,

\begin{equation}
    P'(x) \propto P(x) \exp{(\lambda \cdot r(x))}
\end{equation}
where $r(x)$ is a reward function which models how desirable a generated example is.
To sample from \ref{eq:unnormalized_R}, we first train a diffusion model on the data which satisfies our constraints and then for each $\vx$ we define the reward function as,
\begin{equation*}
    r(\vx') = - \ell_2(\vx',\vx) - 100 \cdot \1[\vx'_i\neq \vx_i]
\end{equation*}
$\vx_i$ refers to the subset of features which are immutable. We add a penalty of 100 to the usual cost whenever immutability constraints are violated.

\begin{table}[H]
\centering
\label{tab:tabsvddfull}
\resizebox{\linewidth}{!}{\begin{tblr}{
  column{5} = {r},
  column{9} = {r},
  column{13} = {r},
  cell{1}{2} = {c=4}{c},
  cell{1}{6} = {c=4}{c},
  cell{1}{10} = {c=4}{c},
  cell{2}{2} = {c},
  cell{2}{3} = {c},
  cell{2}{4} = {c},
  cell{2}{6} = {c},
  cell{2}{7} = {c},
  cell{2}{8} = {c},
  cell{2}{10} = {c},
  cell{2}{11} = {c},
  cell{2}{12} = {c},
  vline{2,6,10} = {2-6}{},
  hline{1,7} = {-}{0.08em},
  hline{2-3,6} = {-}{},
}
Dataset                      & Adult Income  &           &           &             & COMPAS &           &           &             & HELOC      &           &           &             \\
Metric                       & Cost      & Val     & LOF      & Score & Cost      & Val     & LOF      & Score & Cost      & Val     & LOF      & Score \\
TabSyn+SVDD$(\lambda=5.0)$  & 3.11±1.44  & 0.99±0.10 & 0.83±0.38 & 1.58        & 2.18±0.77  & 1.00±0.00 & 0.69±0.46 & 1.37        & 3.14±0.84  & 0.98±0.14 & 0.97±0.17 & 1.80        \\
TabSyn+SVDD$(\lambda=10.0)$ & 3.10±1.39  & 1.00±0.00 & 0.87±0.34 & 1.63        & 2.20±0.82  & 1.00±0.00 & 0.79±0.41 & 1.48        & 2.85±0.78  & 0.99±0.10 & 0.99±0.10 & 1.84        \\
TabSyn+$Q$                   & 0.86±0.39  & 1.00±0.00 & 0.79±0.41 & 1.72        & 0.78±0.44  & 0.99±0.07 & 0.81±0.39 & 1.69        & 2.41±0.65  & 0.99±0.10 & 0.99±0.07 & 1.87        \\
GenRe                        & 0.69±0.30  & 1.00±0.00 & 0.98±0.12 & \textbf{1.93}       & 0.51±0.18  & 0.99±0.10 & 0.97±0.17 & \textbf{1.89}       & 2.01±0.59  & 1.00±0.00 & 1.00±0.00 & \textbf{1.90}        \\
\end{tblr}
}
\end{table}

\subsection{Performance of \sysname~ with the amount of data}
To assess scalability we conducted an experiment in which we train the model on various subsets of data, below we report the results on Adults Income dataset -- largest dataset used in this paper.
\begin{table}[H]
\centering
\resizebox{\linewidth}{!}{\begin{tblr}{
  column{1} = {l},
  column{2} = {r},
  column{3} = {r},
  column{4} = {r},
  column{5} = {r},
  column{6} = {r},
  hline{1,7} = {-}{0.08em},
  hline{2} = {-}{},
}
Fraction of Data & Time    & Cost       & Val       & LOF       & Score \\
0.1             & 798.94  & 1.09±0.66  & 0.92±0.27 & 0.85±0.35 & 1.69  \\
0.2             & 1454.15 & 1.23±0.63  & 1.00±0.00 & 0.99±0.07 & 1.90  \\
0.4             & 2768.06 & 0.94±0.51  & 0.97±0.17 & 0.92±0.27 & 1.82  \\
0.8             & 5219.66 & 0.67±0.28  & 0.92±0.27 & 0.98±0.14 & 1.85  \\
1.0             & 8464.71 & 0.69±0.30  & 1.00±0.00 & 0.98±0.12 & 1.93  \\
\end{tblr}
}
\caption{Performance metrics for different fractions of data.}
\label{tab:fraction_data}
\end{table}

\end{document}